\documentclass{article}

\usepackage{PRIMEarxiv}

\usepackage[utf8]{inputenc}
\usepackage[T1]{fontenc}
\usepackage{mathptmx}                % Times New Roman
\usepackage{amsmath,amssymb}
\usepackage{graphicx}
\usepackage{booktabs}
\usepackage{longtable}
\usepackage{array}
\usepackage{tabularx}
\usepackage{multirow}
\usepackage{xcolor}
\usepackage{float}
\usepackage{caption}
\usepackage{subcaption}
\usepackage{enumitem}
\usepackage{siunitx}
\usepackage{listings}
\usepackage{tikz}
\usepackage{url}

\usepackage{amsthm}
\usetikzlibrary{shapes,arrows.meta,positioning,fit,calc}
\usepackage{algorithm}
\usepackage{algorithmic}
\usepackage{bbm}

\newtheorem{theorem}{Theorem}
\newtheorem{proposition}{Proposition}
\newtheorem{definition}{Definition}
\newtheorem{remark}{Remark}

\usepackage[numbers,sort&compress]{natbib}

\usepackage{hyperref}
\hypersetup{colorlinks=true, linkcolor=blue!60!black, citecolor=blue!60!black, urlcolor=blue!70!black}

\title{Complex-Valued Unitary Representations as Classification Heads for Improved Uncertainty Quantification in Deep Neural Networks
\thanks{\textit{{This work is submitted for review to Neural Networks.}}} 
}

\author{
  A. A. Jafari\\
  University of Tartu\\
  Tartu, Estonia\\
  \texttt{akbar.anbar.jafari@ut.ee}\\
  \and
  C. Ozcinar\\
  University of Tartu\\
  Tartu, Estonia\\
  \texttt{chagri.ozchinar@ut.ee}\\
  \and
  G. Anbarjafari \\
  3S Holding O\"{U} \\
  Tartu 51011, Estonia\\
  \texttt{shb@3sholding.com} \\
}

\begin{document}
\maketitle

\begin{abstract}
Modern deep neural networks achieve high predictive accuracy but remain poorly calibrated: their confidence scores do not reliably reflect the true probability of correctness. We propose a quantum-inspired classification head architecture that projects backbone features into a complex-valued Hilbert space and evolves them under a learned unitary transformation parameterised via the Cayley map. Through a controlled hybrid experimental design---training a single shared backbone and comparing lightweight interchangeable heads---we isolate the effect of complex-valued unitary representations on calibration. Our ablation study on CIFAR-10 reveals that the unitary magnitude head (complex features evolved under a Cayley unitary, read out via magnitude and softmax) achieves an Expected Calibration Error (ECE) of $0.0146$, representing a $2.4\times$ improvement over a standard softmax head ($0.0355$) and a $3.5\times$ improvement over temperature scaling ($0.0510$). Surprisingly, replacing the softmax readout with a Born rule measurement layer---the quantum-mechanically motivated approach---degrades calibration to an ECE of $0.0819$. On the CIFAR-10H human-uncertainty benchmark, the wave function head achieves the lowest KL-divergence ($0.336$) to human soft labels among all compared methods, indicating that complex-valued representations better capture the structure of human perceptual ambiguity. We provide theoretical analysis connecting norm-preserving unitary dynamics to calibration through feature-space geometry, report negative results on out-of-distribution detection and sentiment analysis to delineate the method's scope, and discuss practical implications for safety-critical applications. Code is publicly available.
\end{abstract}

% keywords can be removed
\keywords{uncertainty quantification \and calibration \and complex-valued neural networks \and unitary representations \and quantum-inspired machine learning}

% ============================================================================
% 1. INTRODUCTION
% ============================================================================
\section{Introduction}
\label{sec:intro}

Deep neural networks have achieved remarkable performance across computer vision, natural language processing, and other domains, yet a persistent shortcoming limits their deployment in safety-critical settings: the confidence scores produced by these models are poorly calibrated \cite{guo2017calibration}. A classifier that reports 90\% confidence on a prediction should be correct approximately 90\% of the time; however, modern deep networks systematically exhibit overconfidence, assigning high probabilities to predictions that frequently turn out to be incorrect \cite{minderer2021revisiting}. In applications such as medical diagnosis \cite{begoli2019need,kompa2021second}, autonomous driving \cite{michelmore2018evaluating}, and active learning \cite{gal2017deep}, unreliable uncertainty estimates can lead to consequential errors.

A substantial body of work has addressed this problem through Bayesian approximations \cite{gal2016dropout,blundell2015weight,maddox2019simple}, ensemble methods \cite{lakshminarayanan2017simple}, post-hoc recalibration \cite{platt1999probabilistic,guo2017calibration}, and alternative loss functions \cite{mukhoti2020calibrating}. While effective to varying degrees, these approaches either incur significant computational overhead (ensembles require training and maintaining multiple models), require held-out calibration sets (temperature scaling), or provide only approximate uncertainty (MC-Dropout). A complementary but under-explored direction asks whether the \emph{representational structure} of the network itself can be designed to produce better-calibrated outputs.

In this work, we investigate a quantum-inspired approach to this question. Drawing on the mathematical framework of quantum mechanics \cite{sakurai2020modern,nielsen2010quantum}, we propose classification heads that operate in complex-valued Hilbert spaces with norm-preserving unitary dynamics. The key insight is not the full quantum formalism per se, but rather a specific geometric property: unitary transformations preserve the $\ell_2$-norm of feature vectors, preventing the unconstrained scaling that contributes to overconfident logits in standard linear classifiers.

Complex-valued neural networks have a rich history \cite{georgiou1992complex,tanaka2013complex,lee2022complex}, with notable advances in deep architectures \cite{trabelsi2017deep,guberman2016complex} and unitary recurrent networks \cite{arjovsky2016unitary,wisdom2016full,jing2017tunable}. Quantum-inspired methods have been explored in information retrieval \cite{sordoni2013modeling}, question answering \cite{zhang2018end}, and tensor network classifiers \cite{stoudenmire2016supervised,huggins2019towards}. However, the specific application of complex unitary representations as \emph{drop-in classification heads} for calibration improvement has not been systematically studied.

Our primary contributions are as follows:

\begin{enumerate}
    \item \textbf{Architectural contribution.} We introduce a family of complex-valued unitary classification heads that can be attached to any pretrained backbone. The architecture projects real-valued features into a complex Hilbert space, evolves them under a learned Cayley-parameterised unitary transformation, and produces class probabilities via either Born rule measurement or magnitude-based softmax readout.
    
    \item \textbf{Controlled experimental methodology.} We employ a hybrid backbone--head design in which a single pretrained backbone is shared across all compared methods. This design eliminates confounds due to differences in feature learning capacity, convergence speed, or parameter count, and isolates the contribution of each head component.
    
    \item \textbf{Key empirical finding.} Through systematic ablation on CIFAR-10 \cite{krizhevsky2009learning}, we demonstrate that the unitary magnitude head achieves ECE of $0.0146$---a $2.4\times$ improvement over standard softmax ($0.0355$) and superior to MC-Dropout ($0.0479$), temperature scaling ($0.0510$), and the full Born rule head ($0.0819$). The result that the Born rule \emph{degrades} calibration while unitary dynamics \emph{improves} it is a novel and practically important finding.
    
    \item \textbf{Human uncertainty alignment.} On CIFAR-10H \cite{peterson2019human}, the wave function head achieves the lowest KL-divergence ($0.336$) to human annotator disagreement distributions, suggesting that complex-valued representations naturally capture the structure of perceptual ambiguity.
    
    \item \textbf{Theoretical analysis.} We provide formal analysis connecting the norm-preserving property of unitary maps to improved calibration through bounds on logit magnitude growth.
    
    \item \textbf{Honest delineation of scope.} We report negative results on out-of-distribution detection and compositional sentiment analysis, clearly delineating when complex unitary heads help and when they do not.
\end{enumerate}

The remainder of this paper is organised as follows. Section~\ref{sec:related} reviews related work. Section~\ref{sec:method} presents the mathematical framework and proposed architecture. Section~\ref{sec:experiments} describes the experimental design. Section~\ref{sec:results} reports results across five experiments with detailed analysis. Section~\ref{sec:discussion} discusses practical implications, limitations, and real-world applicability. Section~\ref{sec:conclusion} concludes.

% ============================================================================
% 2. RELATED WORK
% ============================================================================
\section{Related Work}
\label{sec:related}

\subsection{Calibration in Deep Neural Networks}

\cite{guo2017calibration} demonstrated that modern deep networks are significantly miscalibrated and proposed temperature scaling as a simple post-hoc remedy. Subsequent work has refined calibration measurement \cite{nixon2019measuring,kumar2019verified,naeini2015obtaining} and proposed training-time solutions such as focal loss \cite{mukhoti2020calibrating}, Mixup regularisation \cite{zhang2017mixup,thulasidasan2019mixup}, and proper scoring rules \cite{gneiting2007strictly,glenn1950verification,degroot1983comparison}. \cite{minderer2021revisiting} provided a comprehensive study showing that calibration depends on architecture, training procedure, and dataset in complex ways.

\subsection{Uncertainty Quantification}

Bayesian approaches represent weights or predictions as distributions. MC-Dropout \cite{gal2016dropout} approximates Bayesian inference by retaining dropout at test time. Bayes by Backprop \cite{blundell2015weight} maintains explicit weight distributions. SWAG \cite{maddox2019simple} fits a Gaussian approximation to the SGD trajectory. Deep Ensembles \cite{lakshminarayanan2017simple} train multiple independent models and average predictions, achieving state-of-the-art uncertainty but at $N\times$ training and inference cost. Evidential Deep Learning \cite{sensoy2018evidential} places Dirichlet priors over class probabilities. \cite{abdar2021review} and \cite{kendall2017uncertainties} provide comprehensive surveys and taxonomies of aleatoric versus epistemic uncertainty. \cite{ovadia2019can} evaluated uncertainty methods under dataset shift, finding that ensembles remain the most robust approach but at substantial computational cost. \cite{wilson2020bayesian} argued for a probabilistic perspective on deep learning generalisation. Recent work by \cite{du2025multi} demonstrated the importance of Bayesian deep learning for reliable detection in sensor networks, underscoring the practical need for well-calibrated models in safety-critical systems.

\subsection{Out-of-Distribution Detection}

OOD detection aims to identify inputs that differ from the training distribution. \cite{hendrycks2016baseline} established the Maximum Softmax Probability (MSP) baseline. ODIN \cite{liang2017enhancing} improved upon MSP through temperature scaling and input perturbation. Energy-based methods \cite{liu2020energy} use the log-sum-exp of logits as a score. \cite{yang2024generalized} surveyed generalised OOD detection, noting that many methods remain sensitive to the choice of OOD dataset.

\subsection{Complex-Valued Neural Networks}

Complex-valued networks have been studied since the early work of \cite{georgiou1992complex}. \cite{tanaka2013complex} reviewed advances and applications including signal processing. \cite{trabelsi2017deep} introduced deep complex networks with complex batch normalisation and complex weight initialisation, achieving competitive results on audio and image tasks. \cite{guberman2016complex} studied complex-valued convolutions. \cite{lee2022complex} provided a comprehensive recent survey. \cite{virtue2017better} demonstrated advantages in MRI reconstruction where data is inherently complex. \cite{chakraborty2020surreal} studied complex-valued learning as principled transformations on a scaling-and-rotation manifold, providing geometric insight into why complex representations can be advantageous.

\subsection{Unitary and Orthogonal Constraints}

Unitary recurrent neural networks were introduced by \cite{arjovsky2016unitary} to address the vanishing/exploding gradient problem. \cite{wisdom2016full} extended this to full-capacity unitary RNNs. \cite{jing2017tunable} proposed efficient parameterisations. \cite{helfrich2018orthogonal} introduced the scaled Cayley transform for orthogonal RNNs, which we adapt for our classification heads. \cite{lezcano2019cheap} provided cheap orthogonal constraints via exponential maps. On the robustness side, \cite{cisse2017parseval} showed that Parseval (tight-frame) networks improve adversarial robustness, and \cite{li2019preventing} studied Lipschitz constraints in convolutional networks. These works establish that norm-preserving constraints offer benefits beyond gradient flow, motivating our investigation into their effect on calibration.

\subsection{Quantum-Inspired Machine Learning}

Quantum machine learning spans both quantum hardware implementations \cite{biamonte2017quantum,cerezo2022challenges,schuld2015introduction} and classical algorithms inspired by quantum formalism. In the latter category, \cite{sordoni2013modeling} modelled term dependencies for information retrieval using quantum language models. \cite{zhang2018end} applied quantum-like representations to question answering. \cite{stoudenmire2016supervised} demonstrated supervised learning with tensor networks, which share mathematical structure with quantum many-body states. \cite{huggins2019towards} further developed this connection. \cite{li2024efficient} applied quantum-inspired algorithms to molecular conformation generation. \cite{meichanetzidis2020quantum} explored quantum NLP on near-term devices. Our work sits in the classical quantum-inspired category, using the Hilbert space structure and unitary dynamics without requiring quantum hardware.

\subsection{Relationship to Iterative Equilibrium Models}

Our approach shares a conceptual link with recent work on closed-loop and equilibrium-based architectures. \cite{jafari2025closed} proposed closed-loop transformers that model autoregressive generation as iterative convergence to latent equilibria. While our unitary evolution is not explicitly iterative, both approaches leverage structured dynamical systems (equilibrium convergence versus norm-preserving unitary flow) to impose geometric constraints on hidden representations, suggesting that constrained dynamics in feature space is a fruitful design principle for neural architectures.

% ============================================================================
% 3. MATHEMATICAL FRAMEWORK AND PROPOSED METHOD
% ============================================================================
\section{Methodology}
\label{sec:method}

\subsection{Preliminaries: Complex Hilbert Space}

\begin{definition}[Complex Hilbert Space]
A complex Hilbert space $\mathcal{H}$ of dimension $d$ is a complete inner product space over $\mathbb{C}$ with the inner product $\langle \psi | \phi \rangle = \sum_{i=1}^{d} \overline{\psi_i} \phi_i$ for $\psi, \phi \in \mathbb{C}^d$.
\end{definition}

In quantum mechanics, the state of a system is represented by a normalised vector $|\psi\rangle \in \mathcal{H}$ with $\|\psi\|_2 = 1$, and the probability of observing outcome $k$ is given by the Born rule $p(k) = |\langle e_k | \psi \rangle|^2$, where $\{|e_k\rangle\}$ are measurement basis vectors \cite{nielsen2010quantum,sakurai2020modern}. Time evolution is governed by unitary operators $U$ satisfying $U^\dagger U = I$, which preserve the norm: $\|U\psi\| = \|\psi\|$.

\subsection{Proposed Architecture}

We propose a modular classification head that attaches to any frozen or jointly-trained backbone producing real-valued features $\mathbf{f} \in \mathbb{R}^n$. The head consists of three stages, as depicted in Fig.~\ref{fig:architecture}: (1) complex projection, (2) unitary evolution, and (3) probability readout.

\subsubsection{Stage 1: Complex Projection}

The real-valued backbone features $\mathbf{f} \in \mathbb{R}^n$ are projected into a complex-valued state $\psi \in \mathbb{C}^d$ via two learned linear maps:
\begin{equation}
    \psi_0 = W_{\text{re}} \mathbf{f} + i \cdot W_{\text{im}} \mathbf{f}, \quad W_{\text{re}}, W_{\text{im}} \in \mathbb{R}^{d \times n},
    \label{eq:projection}
\end{equation}
followed by $\ell_2$-normalisation to the unit sphere:
\begin{equation}
    \psi = \frac{\psi_0}{\|\psi_0\|_2}.
    \label{eq:normalisation}
\end{equation}

This normalisation ensures $\|\psi\| = 1$, placing the representation on the complex unit hypersphere $S^{2d-1} \subset \mathbb{C}^d$.

\subsubsection{Stage 2: Cayley Unitary Evolution}

The normalised state is evolved under a learned unitary operator $U(\theta)$:
\begin{equation}
    \psi' = U(\theta) \psi.
    \label{eq:evolution}
\end{equation}

We parameterise $U$ via the Cayley transform of a skew-Hermitian matrix, following \cite{helfrich2018orthogonal}. Given a learnable real-valued matrix $A \in \mathbb{R}^{d \times d}$, we construct the skew-symmetric matrix $S = A - A^\top$ and compute:
\begin{equation}
    U = (I - S)(I + S)^{-1}.
    \label{eq:cayley}
\end{equation}

\begin{proposition}[Unitarity of Cayley Map]
\label{prop:unitary}
For any real skew-symmetric matrix $S = -S^\top$, the Cayley transform $U = (I-S)(I+S)^{-1}$ is orthogonal: $U^\top U = I$. Consequently, $\|U\psi\| = \|\psi\|$ for all $\psi$.
\end{proposition}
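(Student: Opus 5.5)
The argument has three steps: show that $I+S$ is invertible, show that $I-S$ and $(I+S)^{-1}$ commute, and then compute $U^\top U$ directly.

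\emph{Invertibility.} For any real vector $x$, skew-symmetry gives $x^\top S x = -x^\top S x$, so $x^\top S x = 0$. Hence
\[
\|(I+S)x\|^2 = \|x\|^2 + 2x^\top S x + \|Sx\|^2 = \|x\|^2 + \|Sx\|^2 \ge \|x\|^2 .
\]
So $I+S$ has trivial kernel and is invertible. The same bound holds for $I-S$. Equivalently, the eigenvalues of $S$ are purely imaginary, so $-1$ is never an eigenvalue. This is the only step with real content; everything after it is algebra.

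\emph{Commutation.} The matrices $I-S$ and $I+S$ are polynomials in $S$, so they commute. Multiplying $(I-S)(I+S)=(I+S)(I-S)$ on both sides by $(I+S)^{-1}$ shows that $(I+S)^{-1}$ commutes with $I-S$.

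\emph{Orthogonality.} Using $S^\top=-S$ and $(M^{-1})^\top = (M^\top)^{-1}$, we have $U^\top = \bigl((I+S)^{-1}\bigr)^\top (I-S)^\top = (I-S)^{-1}(I+S)$. Therefore
\[
U^\top U = (I-S)^{-1}(I+S)(I-S)(I+S)^{-1}.
\]
By the commutation step, $(I+S)(I-S) = (I-S)(I+S)$. Substituting this into the middle gives $U^\top U = I$.

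\emph{Norm preservation.} The head applies $U$ to complex states $\psi \in \mathbb{C}^d$, so the conjugate transpose is the relevant adjoint. Because $U$ is real, $U^\dagger = U^\top$. Then
\[
\|U\psi\|^2 = \psi^\dagger U^\dagger U \psi = \psi^\dagger U^\top U \psi = \psi^\dagger \psi = \|\psi\|^2 .
\]
In particular, for the normalised state of Eq.~(\ref{eq:normalisation}), $\psi' = U\psi$ stays on the unit sphere $S^{2d-1}$.
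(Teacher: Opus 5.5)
Your proof is correct and follows the paper's route: you transpose using $(I+S)^\top = I-S$, obtain $U^\top U = (I-S)^{-1}(I+S)(I-S)(I+S)^{-1}$, and cancel using the commutation of $I\pm S$. Your norm-based invertibility argument $\|(I+S)x\|^2 = \|x\|^2 + \|Sx\|^2$ replaces the eigenvalue argument in the paper's appendix, and your $U^\dagger = U^\top$ step spells out the complex-$\psi$ norm preservation that the paper only asserts; both are small refinements.
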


\begin{proof}
Since $S = -S^\top$, we have $(I+S)^\top = I - S$. Then:
\begin{align}
    U^\top U &= \left[(I+S)^{-1}\right]^\top (I-S)^\top (I-S)(I+S)^{-1} \nonumber \\
    &= (I-S)^{-1}(I+S)(I-S)(I+S)^{-1}.
\end{align}
Since $S$ is skew-symmetric, $(I+S)$ and $(I-S)$ commute:
\begin{equation}
    (I+S)(I-S) = I - S^2 = (I-S)(I+S).
\end{equation}
Therefore $U^\top U = (I-S)^{-1}(I-S)(I+S)(I+S)^{-1} = I$.
\end{proof}

The Cayley parameterisation guarantees exact unitarity at every training step without requiring expensive matrix exponentials or retraction operations on the Stiefel manifold \cite{lezcano2019cheap}. This is computationally efficient: the matrix inverse $(I+S)^{-1}$ requires $\mathcal{O}(d^3)$ operations, which is negligible when $d \ll n_{\text{backbone}}$.

\subsubsection{Stage 3: Probability Readout}

We consider two readout mechanisms:

\paragraph{Born Rule Measurement.} Following quantum mechanics, define $C$ measurement vectors $\{m_k\}_{k=1}^C \in \mathbb{C}^d$ (one per class) as learnable parameters, and compute:
\begin{equation}
    p(k | \mathbf{f}) = \frac{|\langle m_k | \psi' \rangle|^2}{\sum_{j=1}^{C} |\langle m_j | \psi' \rangle|^2}.
    \label{eq:born}
\end{equation}

\paragraph{Magnitude--Softmax Readout.} Extract the magnitude vector $|\psi'| = (|\psi'_1|, \ldots, |\psi'_d|)^\top \in \mathbb{R}_{\geq 0}^d$ and apply a standard linear classifier:
\begin{equation}
    p(k | \mathbf{f}) = \text{softmax}(W_c |\psi'| + b_c)_k, \quad W_c \in \mathbb{R}^{C \times d}.
    \label{eq:magsoftmax}
\end{equation}

A central finding of this work is that the magnitude--softmax readout (Eq.~\ref{eq:magsoftmax}) produces substantially better-calibrated predictions than the Born rule readout (Eq.~\ref{eq:born}). We analyse this theoretically in Section~\ref{sec:theory}.

% ── Architecture Diagram ────────────────────────────────────────────────────
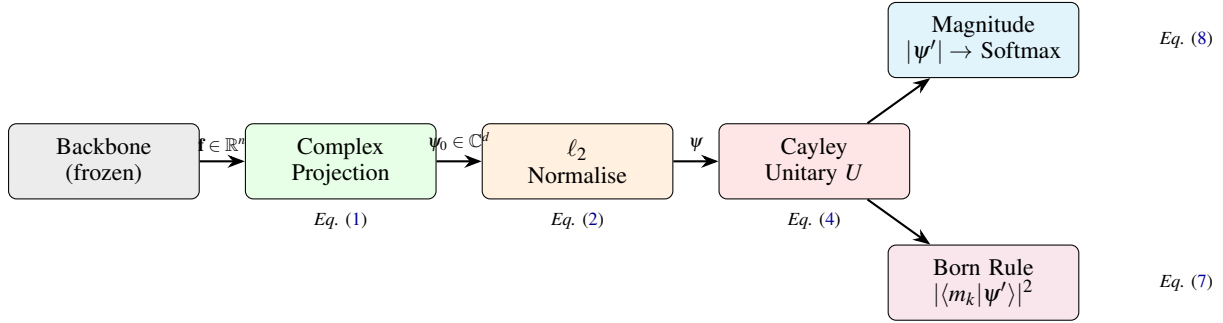
\begin{figure}
\centering
\begin{tikzpicture}[
    node distance=0.6cm,
    block/.style={rectangle, draw, fill=blue!8, text width=2.3cm, text centered, minimum height=1.0cm, rounded corners=3pt, font=\small},
    arrow/.style={-Stealth, thick},
    label/.style={font=\scriptsize\itshape, text width=2.5cm, text centered},
]
    \node[block, fill=gray!15] (backbone) {Backbone \\ (frozen)};
    \node[block, right=of backbone, fill=green!10] (proj) {Complex\\Projection};
    \node[block, right=of proj, fill=orange!12] (norm) {$\ell_2$\\Normalise};
    \node[block, right=of norm, fill=red!10] (cayley) {Cayley\\Unitary $U$};
    
    \node[block, below right=0.6cm and -0.3cm of cayley, fill=purple!10] (born) {Born Rule\\$|\langle m_k|\psi'\rangle|^2$};
    \node[block, above right=0.6cm and -0.3cm of cayley, fill=cyan!10] (mag) {Magnitude\\$|\psi'| \to$ Softmax};
    
    \draw[arrow] (backbone) -- node[above, font=\scriptsize]{$\mathbf{f}\in\mathbb{R}^n$} (proj);
    \draw[arrow] (proj) -- node[above, font=\scriptsize]{$\psi_0\in\mathbb{C}^d$} (norm);
    \draw[arrow] (norm) -- node[above, font=\scriptsize]{$\psi$} (cayley);
    \draw[arrow] (cayley) -- node[right, font=\scriptsize]{} (mag);
    \draw[arrow] (cayley) -- node[right, font=\scriptsize]{} (born);
    
    \node[label, below=0.05cm of proj] {Eq.~\eqref{eq:projection}};
    \node[label, below=0.05cm of norm] {Eq.~\eqref{eq:normalisation}};
    \node[label, below=0.05cm of cayley] {Eq.~\eqref{eq:cayley}};
    \node[label, right=0.05cm of mag] {Eq.~\eqref{eq:magsoftmax}};
    \node[label, right=0.05cm of born] {Eq.~\eqref{eq:born}};
\end{tikzpicture}
\caption{Architecture of the proposed complex-valued unitary classification head. Real-valued backbone features are projected into a complex Hilbert space, normalised to unit norm, evolved under a learned Cayley unitary, and read out via either magnitude--softmax (proposed best variant) or Born rule measurement.}
\label{fig:architecture}
\end{figure}

\subsection{Ablation Variants}

To isolate the contribution of each component, we define six head variants (Table~\ref{tab:variants}):

\begin{table}
\centering
\caption{Ablation head variants. Each row indicates which architectural components are active ($\checkmark$) or absent ($\times$).}
\label{tab:variants}
\small
\begin{tabular}{@{}lcccl@{}}
\toprule
\textbf{Variant} & \textbf{Complex} & \textbf{Unitary} & \textbf{Born} & \textbf{Readout} \\
\midrule
Full WaveHead     & $\checkmark$ & $\checkmark$ & $\checkmark$ & Born rule \\
NoBorn (Mag)      & $\checkmark$ & $\checkmark$ & $\times$     & Magnitude + softmax \\
NoUnitary         & $\checkmark$ & $\times$     & $\checkmark$ & Born rule \\
ComplexLinear     & $\checkmark$ & $\times$     & $\times$     & Magnitude + softmax \\
Softmax           & $\times$     & $\times$     & $\times$     & Linear + softmax \\
Softmax (2x)      & $\times$     & $\times$     & $\times$     & Wider linear + softmax \\
\bottomrule
\end{tabular}
\end{table}

\subsection{Theoretical Analysis: Why Unitarity Improves Calibration}
\label{sec:theory}

We now provide theoretical justification for why unitary representations improve calibration and why the Born rule readout is suboptimal.

\begin{theorem}[Logit Magnitude Bound under Unitary Evolution]
\label{thm:logit_bound}
Let $\mathbf{f} \in \mathbb{R}^n$ be a feature vector with $\|\mathbf{f}\| = r$. For a standard linear classifier $g(\mathbf{f}) = W\mathbf{f} + b$, the logit magnitude is bounded by $\|g(\mathbf{f})\| \leq \|W\|_F \cdot r + \|b\|$, which grows linearly with $r$. For the unitary magnitude head, define $\psi = U \cdot \text{norm}(W_{\text{re}} \mathbf{f} + i W_{\text{im}} \mathbf{f})$ and $g'(\mathbf{f}) = W_c |\psi| + b_c$. Then:
\begin{equation}
    \|g'(\mathbf{f})\| \leq \|W_c\|_F + \|b_c\|,
    \label{eq:logit_bound}
\end{equation}
independent of $\|\mathbf{f}\|$, since $\|\psi\| = 1$ and $\||\psi|\| \leq \|\psi\| = 1$.
\end{theorem}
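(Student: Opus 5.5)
The argument is a triangle inequality combined with the operator-norm-versus-Frobenius-norm comparison, applied first to the standard head and then to the unitary magnitude head.

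\textbf{Standard head.} Write $\|g(\mathbf{f})\| = \|W\mathbf{f}+b\| \le \|W\mathbf{f}\| + \|b\|$. Then bound $\|W\mathbf{f}\| \le \|W\|_2\,\|\mathbf{f}\| \le \|W\|_F\, r$, using that the spectral norm is dominated by the Frobenius norm. This gives linear growth in $r$. I would add a short remark that the growth is genuinely attained: taking $\mathbf{f} = r v$ with $v$ a top right singular vector gives $\|W\mathbf{f}\| = \|W\|_2\, r$. Hence no $r$-independent bound is possible in general, which is the contrast the theorem intends.

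\textbf{Unitary magnitude head.} Here I chain three facts.
\begin{enumerate}
\item $\text{norm}(\psi_0) = \psi_0/\|\psi_0\|_2$ has unit norm by Eq.~\eqref{eq:normalisation}. This requires $\psi_0 \neq 0$; I would state it as a standing assumption, or note that implementations add an $\epsilon$, which only makes the norm $\le 1$.
\item $U$ preserves the norm, by Proposition~\ref{prop:unitary}. One subtlety is that the proposition is stated for real orthogonal $U$ acting on real vectors, while here $U$ acts on $\mathbb{C}^d$. Since $U$ is real with $U^\top U = I$, we have $U^\dagger = U^\top$, so $\|U\psi\|^2 = \psi^\dagger U^\top U \psi = \|\psi\|^2$ also holds for complex $\psi$. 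I would include this one-line extension explicitly.
\item The entrywise modulus satisfies $\||\psi|\|_2^2 = \sum_i |\psi_i|^2 = \|\psi\|_2^2$. So in fact $\||\psi|\| = \|\psi\| = 1$, which is stronger than the claimed inequality.
\end{enumerate}
Then $\|g'(\mathbf{f})\| \le \|W_c|\psi|\| + \|b_c\| \le \|W_c\|_2 \cdot 1 + \|b_c\| \le \|W_c\|_F + \|b_c\|$.

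The bound is independent of $\mathbf{f}$ because $\mathbf{f}$ enters only through the normalised direction. Positive scaling of $\mathbf{f}$ scales $\psi_0$ by the same factor, and this cancels under normalisation.

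\textbf{Main obstacle.} There is no real analytic difficulty; the step needing care is the domain issue. This means handling the degenerate case $\psi_0 = 0$ (for instance when $\mathbf{f} \in \ker W_{\text{re}} \cap \ker W_{\text{im}}$) and checking that Proposition~\ref{prop:unitary} legitimately transfers to complex inputs. I would also note that the bound controls logit magnitude in terms of the parameters $W_c, b_c$. It does not by itself bound softmax confidence unless these parameters are regularised, so calibration follows only heuristically from the bound.
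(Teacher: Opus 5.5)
Your proposal is correct and follows essentially the same route as the paper's proof. The chain is the same: unit norm from Eq.~\eqref{eq:normalisation}, norm preservation under the Cayley map (Proposition~\ref{prop:unitary}), $\||\psi|\|_2 \le \|\psi\|_2 = 1$, and then the triangle inequality with $\|W_c x\| \le \|W_c\|_F \|x\|$. Your additions tighten the argument without changing it: the $\psi_0 \neq 0$ caveat, the explicit check via $U^\dagger = U^\top$ that the real orthogonal $U$ preserves norms on $\mathbb{C}^d$, and the observation that $\||\psi|\|_2 = \|\psi\|_2$ holds with equality, which is more accurate than the paper's ``component-wise triangle inequality'' justification.
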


\begin{proof}
By construction, $\psi$ is normalised to unit $\ell_2$-norm (Eq.~\ref{eq:normalisation}), and unitary evolution preserves this norm (Proposition~\ref{prop:unitary}): $\|\psi'\| = \|U\psi\| = \|\psi\| = 1$. The magnitude vector satisfies $\||\psi'|\|_2 \leq \|\psi'\|_2 = 1$ by the triangle inequality applied component-wise. Therefore:
\begin{equation}
    \|W_c |\psi'| + b_c\| \leq \|W_c\|_F \cdot \||\psi'|\|_2 + \|b_c\| \leq \|W_c\|_F + \|b_c\|.
\end{equation}
\end{proof}

\begin{remark}
Theorem~\ref{thm:logit_bound} explains the calibration advantage. In standard networks, feature magnitudes grow with depth and data complexity, producing logits of uncontrolled magnitude. Large logits yield sharply peaked softmax distributions---overconfidence. The normalisation-plus-unitarity mechanism imposes a hard ceiling on logit magnitudes, naturally limiting confidence. This is related to but distinct from Lipschitz constraints \cite{cisse2017parseval,li2019preventing}: unitarity provides an exact isometry ($\|U\psi\| = \|\psi\|$) rather than an upper bound.
\end{remark}

\begin{proposition}[Information Bottleneck of Born Rule Readout]
\label{prop:born_bottleneck}
Let $C$ denote the number of classes and $d$ the Hilbert dimension. The Born rule measurement (Eq.~\ref{eq:born}) maps $\psi' \in \mathbb{C}^d$ to a probability simplex $\Delta^{C-1}$ through $C$ squared inner products. When $C < d$, this mapping has a $(2d - C)$-dimensional null space: distinct states $\psi'_1 \neq \psi'_2$ can produce identical class probabilities if they differ only in the null space of the measurement operator.
\end{proposition}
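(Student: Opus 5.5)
The plan is to identify $\mathbb{C}^d$ with $\mathbb{R}^{2d}$ and study the linear map that sits inside the Born rule. Define $M:\mathbb{C}^d\to\mathbb{C}^C$ by $M\psi' = (\langle m_1|\psi'\rangle,\dots,\langle m_C|\psi'\rangle)^\top$. This map is $\mathbb{C}$-linear, so it is also $\mathbb{R}$-linear as a map $\mathbb{R}^{2d}\to\mathbb{R}^{2C}$. The Born probabilities in Eq.~\eqref{eq:born} factor as $\psi'\mapsto M\psi'\mapsto (|z_k|^2/\sum_j|z_j|^2)_k$. Hence any two states with $M\psi'_1 = M\psi'_2$ (and $M\psi'_1\neq 0$, so the normalisation is defined) give identical class probabilities. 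This already proves the second claim of the proposition once the kernel is shown to be nontrivial.

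The first step is to compute the size of the null space. Write $V=\mathrm{span}_{\mathbb{C}}\{m_k\}$, which has complex dimension at most $C$. Then $\ker M = V^\perp$, which has complex dimension at least $d-C$, that is, real dimension at least $2d-2C$. Equality holds generically, namely when the $m_k$ are linearly independent.

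Here is the main obstacle. The literal count $2d-C$ does not equal $2d-2C$. To reconcile the two, I would use a second invariance. The probability map on $\mathbb{C}^C$ depends only on the $C$ moduli $|z_k|$ up to a common scale. So the fibres of the composite map are larger than $\ker M$: one may independently rotate the phase of each component $\langle m_k|\psi'\rangle$. This adds $C$ real degrees of freedom inside the range of $M$, each rotating one coordinate's phase. Counting real dimensions of the level sets then gives $(2d-2C)+C = 2d-C$ directions along which the probabilities are unchanged, which matches the stated figure. This count is for generic points with all $\langle m_k|\psi'\rangle\neq0$ and independent $m_k$.

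I would present this as the local dimension of the fibre, using the implicit function/rank argument: the map $\mathbb{R}^{2d}\to\mathbb{R}^C$, $\psi'\mapsto(|\langle m_k|\psi'\rangle|^2)_k$, has Jacobian of rank $C$ at generic points. Its level sets are therefore $(2d-C)$-dimensional manifolds. The normalisation onto $\Delta^{C-1}$ only enlarges the fibres, by one additional scaling direction.

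Finally, for an explicit instance of $\psi'_1\neq\psi'_2$, I would take any nonzero $v\in V^\perp$ (which exists because $C<d$) and set $\psi'_2=(\psi'_1+v)/\|\psi'_1+v\|$. If unit-norm states are required, this rescaling is harmless because the Born ratio is scale-invariant. I would also remark that restricting to the unit sphere removes one real dimension in general. The statement should therefore be read as a count in the ambient $\mathbb{R}^{2d}$, and I would state this caveat explicitly.
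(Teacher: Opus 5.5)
Your proposal is correct, and it takes a genuinely different and more careful route than the paper.

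\textbf{The paper's route.} The paper's proof is a bare degree-of-freedom count: $2d$ real parameters in $\psi'$, minus the $C-1$ free coordinates of a point in $\Delta^{C-1}$. It mentions that $\mathrm{span}\{m_k\}$ has dimension at most $C$ but does not feed this into the count. It never checks that the map has full rank. Its conclusion is $2d-C+1$, not the $2d-C$ asserted in the statement. The count is quick, but it is only a heuristic.

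\textbf{Your route.} You split the invariance into two parts. The first is the genuine linear null space $V^\perp$ of the measurement operator, of real dimension $2d-2C$ for independent $m_k$. The second is the $C$ independent phase rotations of the amplitudes $\langle m_k|\psi'\rangle$. You certify the total with the rank theorem at generic points (independent $m_k$, all amplitudes nonzero), and you give an explicit pair $\psi'_1\neq\psi'_2$.

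This buys three things:
(i) the hypotheses that the paper leaves implicit;
(ii) the honest observation that the $(2d-C)$-dimensional invariance set is not a linear subspace, so ``null space'' in the statement can only mean the fibre;
(iii) an explanation of the paper's internal inconsistency. $2d-C$ is the fibre dimension of the unnormalised map $\psi'\mapsto(|\langle m_k|\psi'\rangle|^2)_k$. The extra radial direction absorbed by normalising onto $\Delta^{C-1}$ produces the paper's $2d-C+1$.

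Your decomposition even describes the unnormalised fibre globally, as a $C$-torus inside $V$ times $V^\perp$. So the implicit-function step is optional.

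\textbf{Sharpening your final caveat.} In the architecture, $\psi'$ lies on the unit sphere $S^{2d-1}$, by Eq.~\eqref{eq:normalisation} and Proposition~\ref{prop:unitary}. On the sphere, the normalised Born map has generic fibres of dimension $(2d-1)-(C-1)=2d-C$. The scaling direction gained from normalisation and the dimension lost to the sphere cancel exactly. So the stated count holds as written for the actual Born readout on unit-norm states. You do not need to retreat to the ambient, unnormalised reading. Your remark that restricting to the sphere removes a dimension is right for the unnormalised map, but for the normalised one it exactly offsets the radial direction.
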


\begin{proof}
Each measurement $|\langle m_k | \psi' \rangle|^2$ depends only on the projection of $\psi'$ onto $m_k$. The subspace spanned by $\{m_k\}_{k=1}^C$ has dimension at most $C$. Since $\psi' \in \mathbb{C}^d$ has $2d$ real degrees of freedom and the measurement extracts $C$ values (constrained to sum to 1, yielding $C-1$ free parameters), the null space has dimension $2d - C + 1$. This information loss makes the Born rule less expressive than the magnitude--softmax readout, which accesses all $d$ components of $|\psi'|$ before the linear projection $W_c$.
\end{proof}

Proposition~\ref{prop:born_bottleneck} explains our experimental finding that Born rule heads exhibit worse calibration: they discard information from the evolved state that could help distinguish between classes with similar confidence levels.

\subsection{Training Procedure}
\label{sec:training}

All heads are trained with cross-entropy loss. For heads with complex-valued components, we apply a warmup schedule: classification loss only for the first $T_w$ epochs, followed by a gradual introduction of a phase diversity regulariser:
\begin{equation}
    \mathcal{L}_{\text{phase}} = -\lambda \cdot \text{Var}(\angle \psi'_1, \ldots, \angle \psi'_d),
    \label{eq:phase_reg}
\end{equation}
where $\angle \psi'_i$ denotes the phase of the $i$-th component. This encourages the network to utilise the full phase structure rather than collapsing to real-valued representations. The regulariser weight is ramped linearly from $0$ to $\lambda$ over $T_r$ epochs after warmup.

\begin{algorithm}
\caption{Training a Complex Unitary Classification Head}
\label{alg:training}
\begin{algorithmic}[1]
\REQUIRE Backbone features $\{(\mathbf{f}_i, y_i)\}$, warmup epochs $T_w$, ramp epochs $T_r$, total epochs $T$, regulariser weight $\lambda$
\FOR{$t = 1$ to $T$}
    \FOR{each mini-batch $(\mathbf{F}, \mathbf{y})$}
        \STATE $\psi_0 \leftarrow W_{\text{re}} \mathbf{F} + i \cdot W_{\text{im}} \mathbf{F}$ \hfill \textit{// Complex projection}
        \STATE $\psi \leftarrow \psi_0 / \|\psi_0\|_2$ \hfill \textit{// Normalise}
        \STATE $S \leftarrow A - A^\top$ \hfill \textit{// Skew-symmetric}
        \STATE $U \leftarrow (I - S)(I + S)^{-1}$ \hfill \textit{// Cayley unitary}
        \STATE $\psi' \leftarrow U \psi$ \hfill \textit{// Unitary evolution}
        \STATE $\mathbf{p} \leftarrow \text{Readout}(\psi')$ \hfill \textit{// Born or Magnitude-Softmax}
        \STATE $\mathcal{L} \leftarrow -\sum_k y_k \log p_k$ \hfill \textit{// Cross-entropy}
        \IF{$t > T_w$}
            \STATE $\alpha \leftarrow \min(1, (t - T_w) / T_r)$
            \STATE $\mathcal{L} \leftarrow \mathcal{L} - \alpha \lambda \cdot \text{Var}(\angle \psi')$ \hfill \textit{// Phase regulariser}
        \ENDIF
        \STATE Update parameters via Adam
    \ENDFOR
\ENDFOR
\end{algorithmic}
\end{algorithm}

% ============================================================================
% 4. EXPERIMENTAL SETUP
% ============================================================================
\section{Experimental Setup}
\label{sec:experiments}

\subsection{Hybrid Backbone--Head Design}

A central methodological decision in our work is the \emph{hybrid backbone--head} experimental design. Rather than training full end-to-end models for each method (which conflates feature learning quality with head architecture), we:

\begin{enumerate}
    \item Train a single ResNet-style backbone \cite{he2016deep} on the target dataset (CIFAR-10) for 50 epochs using SGD with momentum 0.9, learning rate 0.1, cosine annealing, and weight decay $5 \times 10^{-4}$.
    \item Freeze the backbone and extract penultimate-layer features for all training and test images.
    \item Train lightweight classification heads on the extracted 64-dimensional features for 30 epochs using Adam with learning rate $10^{-3}$.
\end{enumerate}

This design ensures that all heads receive identical feature representations, that differences in accuracy, calibration, and uncertainty stem \emph{solely} from the head architecture, and that each head trains in seconds rather than hours---enabling thorough ablation on CPU.

\subsection{Datasets}

\paragraph{CIFAR-10 and CIFAR-10H.} CIFAR-10 \cite{krizhevsky2009learning} consists of 50,000 training and 10,000 test images across 10 classes. CIFAR-10H \cite{peterson2019human} augments the test set with per-image soft labels derived from approximately 50 human annotators, providing ground-truth distributions of human uncertainty. We use CIFAR-10H for calibration and human-alignment evaluation (Experiments 2--3, 5).

\paragraph{OOD Benchmarks.} For out-of-distribution detection (Experiment 3), we use SVHN \cite{netzer2011reading} (street house numbers), CIFAR-100 (coarse-grained, 100 classes), and synthetic benchmarks (Gaussian noise, uniform noise) as OOD sources.

\paragraph{Stanford Sentiment Treebank.} For sentiment analysis (Experiment 4), we use SST \cite{socher2013recursive} with binary labels and a specifically curated negation subset containing sentences with explicit negation words (``not'', ``never'', ``no'').

\paragraph{Synthetic Multi-Modal Regression.} For Experiment 1, we generate data from $y = \pm\sqrt{x} + \epsilon$ where $\epsilon \sim \mathcal{N}(0, 0.1)$ and $x \in [0, 2]$, producing a two-branched distribution that tests each model's ability to capture multi-modality.

\subsection{Evaluation Metrics}

\paragraph{Expected Calibration Error (ECE).} Following \cite{guo2017calibration} and \cite{naeini2015obtaining}, we partition predictions into $B = 15$ equal-width bins by confidence and compute:
\begin{equation}
    \text{ECE} = \sum_{b=1}^{B} \frac{|B_b|}{N} \left| \text{acc}(B_b) - \text{conf}(B_b) \right|,
\end{equation}
where $B_b$ is the set of samples in bin $b$, $\text{acc}(B_b)$ and $\text{conf}(B_b)$ are the average accuracy and confidence in each bin.

\paragraph{Negative Log-Likelihood (NLL).} $\text{NLL} = -\frac{1}{N}\sum_{i=1}^{N}\log p(y_i | \mathbf{x}_i)$.

\paragraph{Brier Score.} $\text{Brier} = \frac{1}{N}\sum_{i=1}^{N}\sum_{k=1}^{C}(p_k - \mathbbm{1}[y_i = k])^2$ \cite{glenn1950verification}.

\paragraph{KL-Divergence to Human Soft Labels.} For CIFAR-10H, $\text{KL} = \frac{1}{N}\sum_i \text{KL}(q_i \| p_i)$ where $q_i$ is the human annotator distribution and $p_i$ is the model's prediction.

\paragraph{OOD Detection.} We report AUROC and FPR at 95\% TPR (FPR95) using maximum class probability as the confidence score \cite{hendrycks2016baseline}.

\paragraph{Mode Coverage and Quality.} For the synthetic experiment, we compute mode coverage (fraction of true modes with at least one predicted sample within threshold $\tau = 0.3$) and mode quality (fraction of predicted samples within $\tau$ of a true mode).

\subsection{Compared Methods}

Table~\ref{tab:baselines} summarises all compared methods across experiments. All heads operating on extracted features have comparable parameter counts (14--17K), with the exception of the minimal softmax head (0.7K) and the wider softmax (9.6K), which are included for reference.

\begin{table}
\centering
\caption{Overview of compared methods across experiments. Exps.\ 2, 3, 5 use the hybrid backbone--head design; Exps.\ 1, 4 train end-to-end.}
\label{tab:baselines}
\small
\begin{tabular}{@{}lcc@{}}
\toprule
\textbf{Method} & \textbf{Experiments} & \textbf{Type} \\
\midrule
Softmax Head & 2, 3, 5 & Baseline head \\
WaveFunction Head & 2, 3, 5 & Proposed (Born) \\
NoBorn (Magnitude) Head & 5 & Proposed (best) \\
MC-Dropout Head & 2, 3 & Bayesian approx. \\
Evidential Head & 2, 3 & Dirichlet-based \\
Temperature Scaling & 2 & Post-hoc \\
Energy Head & 3 & Energy-based OOD \\
\midrule
Wave MLP & 1 & End-to-end \\
Real MLP & 1 & End-to-end baseline \\
MDN & 1 & Mixture density \\
Deep Ensemble & 1 & Ensemble baseline \\
\midrule
WaveTransformer & 4 & End-to-end \\
Transformer & 4 & End-to-end baseline \\
LSTM & 4 & Sequential baseline \\
TextCNN & 4 & Convolutional baseline \\
\bottomrule
\end{tabular}
\end{table}

% ============================================================================
% 5. RESULTS
% ============================================================================
\section{Results}
\label{sec:results}

We present results from five experiments designed to comprehensively evaluate complex-valued unitary representations. All experiments are conducted on CPU, use three random seeds (42, 123, 456) with mean $\pm$ standard deviation reported, and all code will be made publicly available.

\subsection{Experiment 1: Synthetic Multi-Modal Regression}

This experiment tests whether complex-valued representations with superposition can capture multi-modal output distributions without explicit mixture components.

\paragraph{Setup.} We train four models end-to-end: a Wave MLP (94.9K parameters) that outputs density via Born rule binning over 200 bins, a Real MLP (25.6K), a Mixture Density Network \cite{bishop1994mixture} with 5 Gaussian components (13.5K), and a Deep Ensemble of 3 Real MLPs (76.8K). All models train for 300 epochs.

\paragraph{Results.} All models achieve 1.0 mode coverage, successfully detecting both branches (Fig.~\ref{fig:exp1_scatter}). Mode quality, which measures concentration of probability mass near true modes, reveals meaningful differences: MDN achieves the highest quality (0.431), followed by Wave MLP (0.214), with Real MLP and Deep Ensemble both at 0.043 (Fig.~\ref{fig:exp1_metrics}). Training curves (Fig.~\ref{fig:exp1_training}) reveal that the Wave MLP converges to a substantially higher loss ($\sim$3.73) than the Real MLP ($\sim$2.63), indicating an optimisation difficulty in the complex-valued density estimation pipeline.

\paragraph{Analysis.} The Wave MLP produces broad, multi-peaked density profiles (Fig.~\ref{fig:exp1_density}) that capture both modes through superposition---without explicitly parameterising a mixture. This is a structural advantage: the Born rule naturally produces multi-modal outputs when the wave function has non-zero amplitude at multiple locations. However, MDN's explicit Gaussian parameterisation gives it a stronger inductive bias for this task, yielding sharper peaks. The interference analysis (Fig.~\ref{fig:exp1_interference}) confirms that the wave function develops non-trivial phase structure across the Hilbert space dimensions, with phase diversity varying by input $x$---evidence that the complex representation is not collapsing to real-valued behaviour.

\begin{figure}
    \centering
    \includegraphics[width=\linewidth]{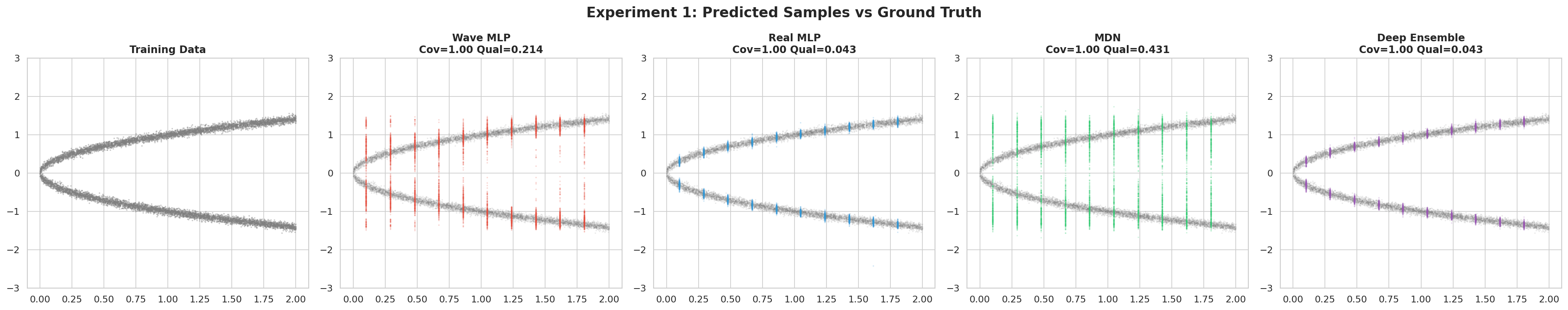}
    \caption{Experiment 1: Predicted samples overlaid on ground-truth data. Wave MLP captures both branches through superposition (Qual=0.214), outperforming Real MLP (0.043) and matching Deep Ensemble diversity, though MDN (0.431) achieves the sharpest concentration near true modes.}
    \label{fig:exp1_scatter}
\end{figure}

\begin{figure}
    \centering
    \includegraphics[width=\linewidth]{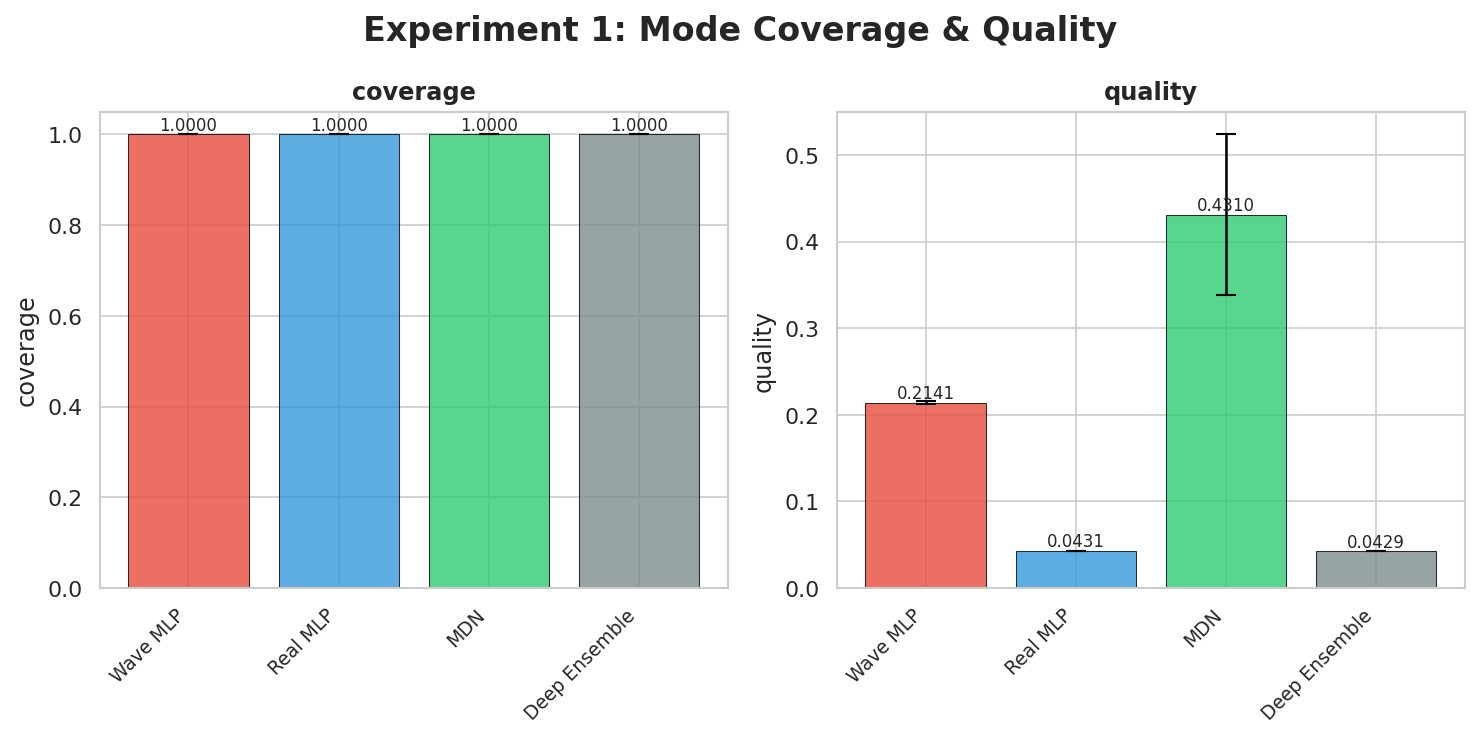}
    \caption{Experiment 1: Mode coverage (left) and quality (right). All models achieve full coverage; Wave MLP's quality is $5\times$ that of Real MLP, demonstrating the superposition advantage for multi-modality, though MDN's parametric mixture dominates.}
    \label{fig:exp1_metrics}
\end{figure}

\begin{figure}
    \centering
    \includegraphics[width=\linewidth]{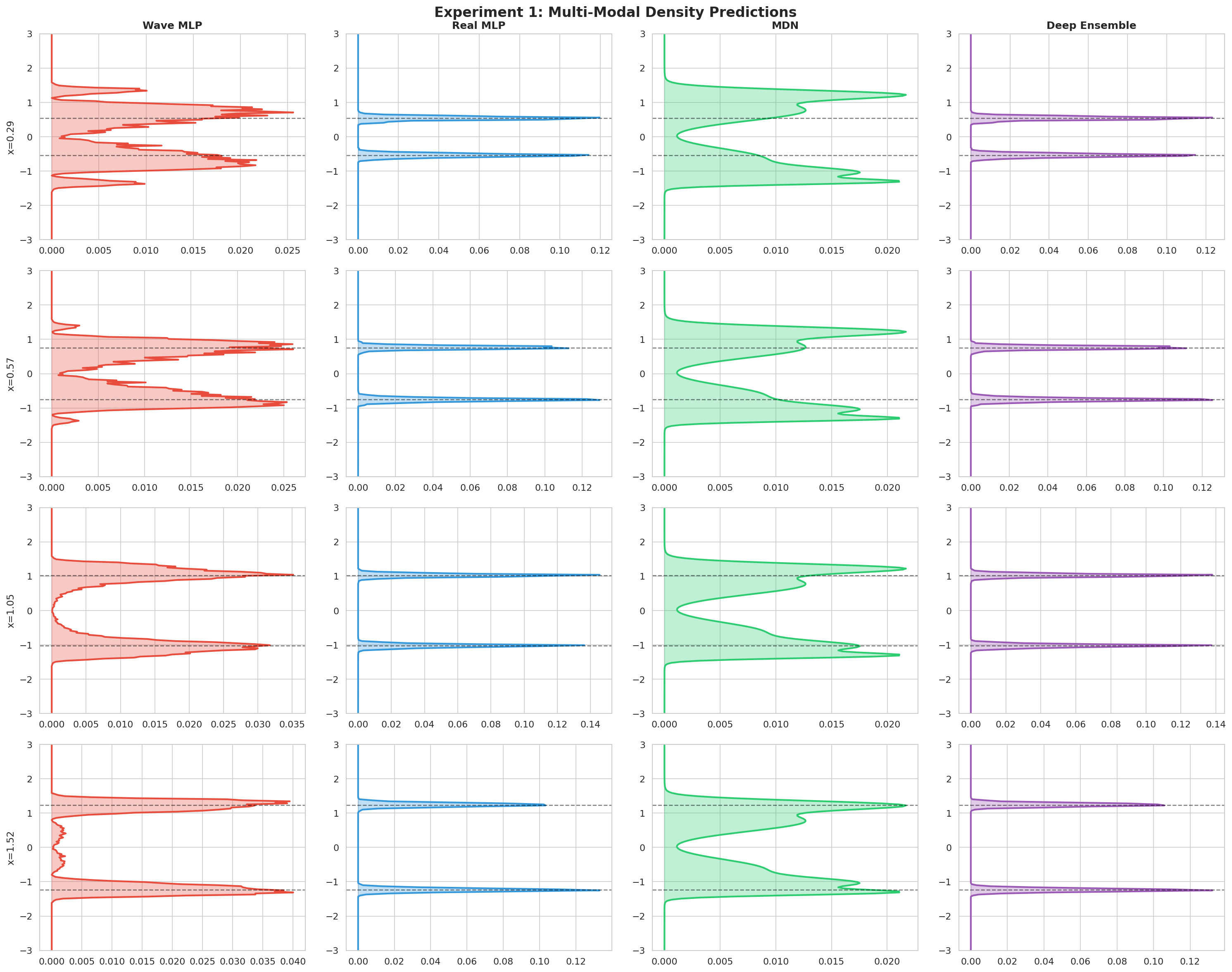}
    \caption{Experiment 1: Predicted density profiles at four $x$ values. Wave MLP (left, red) produces broad multi-peaked distributions via Born rule. Real MLP and Deep Ensemble produce sharp unimodal peaks at each mode. MDN captures both modes with tight Gaussian components.}
    \label{fig:exp1_density}
\end{figure}

\begin{figure}
    \centering
    \begin{subfigure}[b]{0.48\linewidth}
        \includegraphics[width=\linewidth]{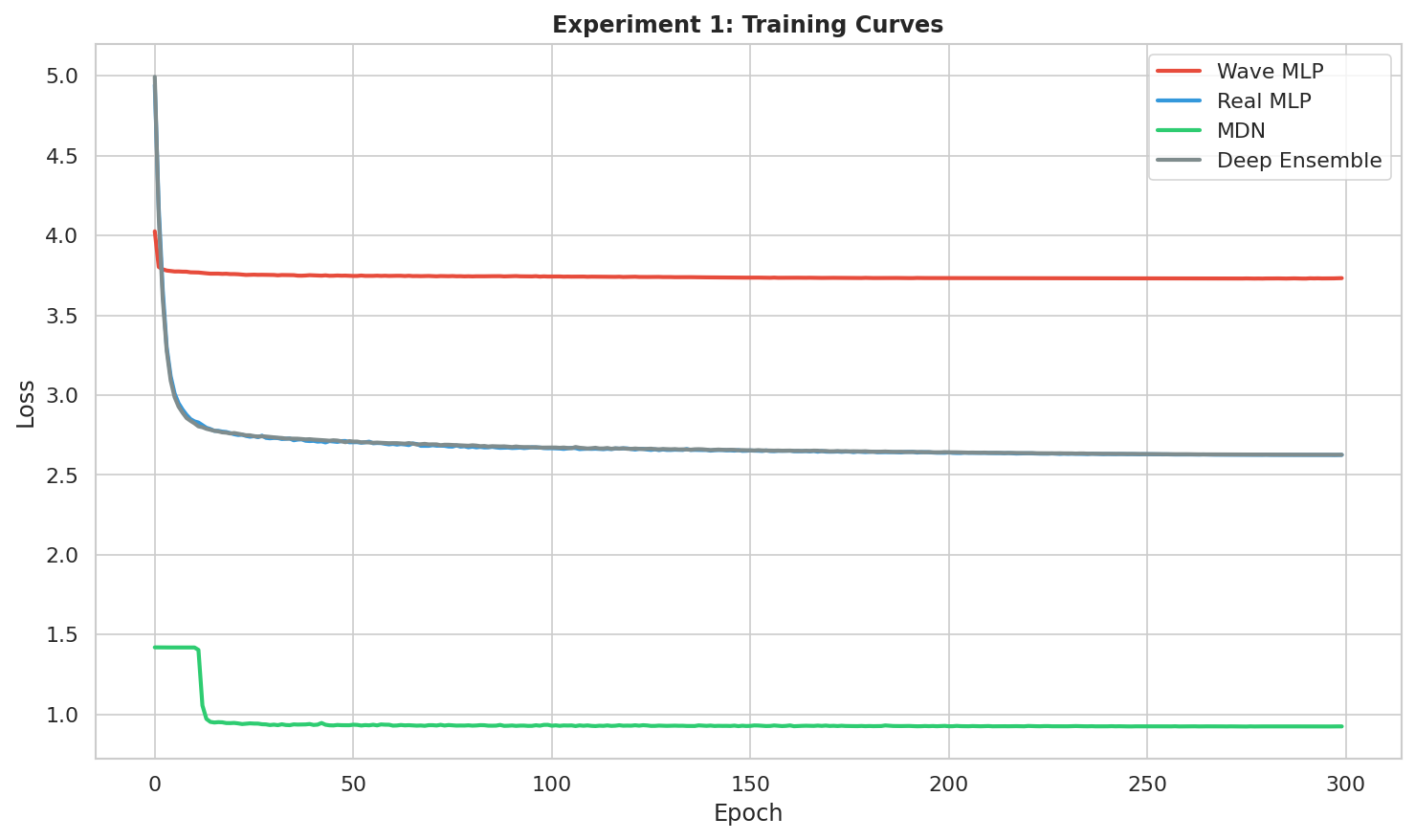}
        \caption{Training curves}
        \label{fig:exp1_training}
    \end{subfigure}
    \hfill
    \begin{subfigure}[b]{0.48\linewidth}
        \includegraphics[width=\linewidth]{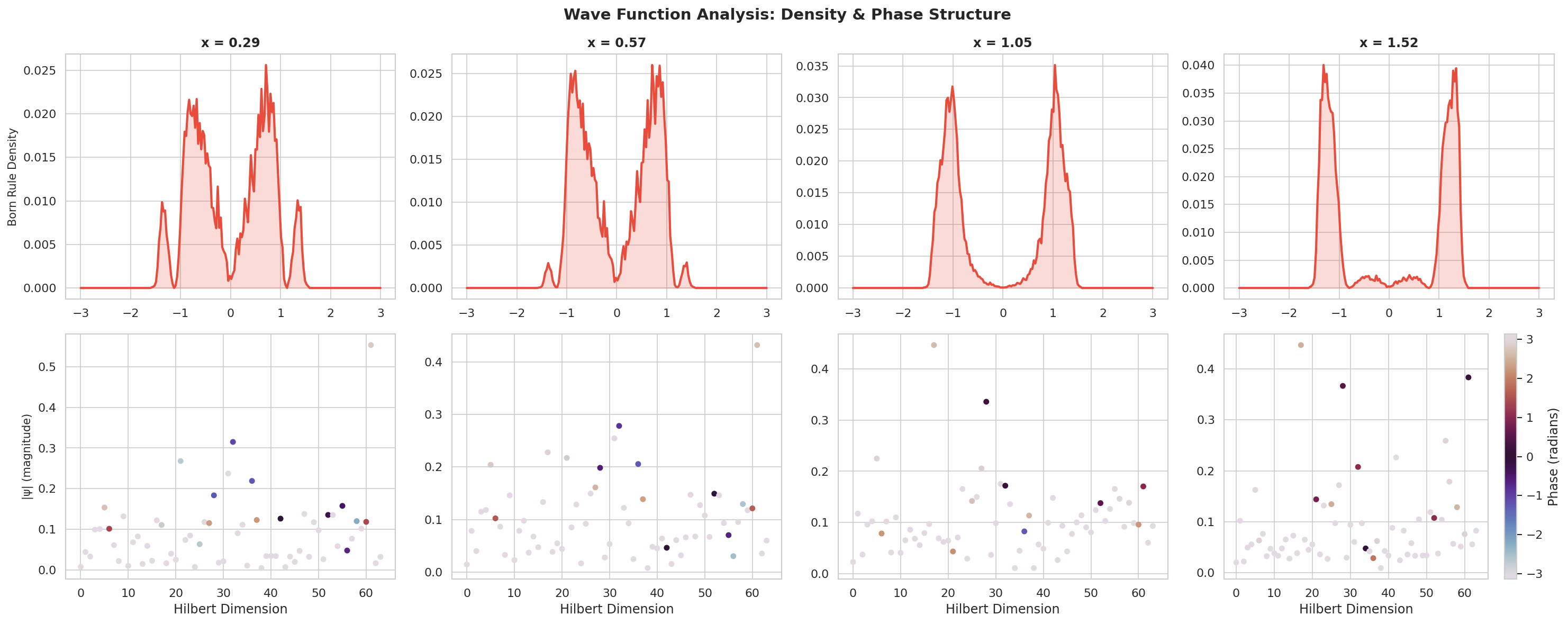}
        \caption{Wave function phase structure}
        \label{fig:exp1_interference}
    \end{subfigure}
    \caption{Experiment 1: (a) Training loss convergence. Wave MLP plateaus at higher loss, indicating optimisation difficulty in complex-valued density estimation. (b) Phase structure of the evolved wave function at four $x$ values, showing non-trivial phase diversity across Hilbert space dimensions.}
\end{figure}

\subsection{Experiment 2: Calibration on CIFAR-10H}

This is the central experiment evaluating whether complex-valued heads improve calibration using the hybrid backbone--head design.

\paragraph{Setup.} A ResNet backbone trained to 91.3\% accuracy on CIFAR-10 provides 64-dimensional features. Five lightweight heads are trained: Softmax (0.7K params), WaveFunction with Born rule (17.2K), MC-Dropout (9.4K), Evidential (0.7K), and Temperature Scaling (0.7K). Evaluation uses the CIFAR-10H test set with human soft labels.

\paragraph{Results.} Table~\ref{tab:exp2} reports all metrics. Key findings:

\begin{table}
\centering
\caption{Experiment 2: CIFAR-10H results with shared backbone (91.3\% backbone accuracy). Bold indicates best; underline indicates second best. $\downarrow$ = lower is better.}
\label{tab:exp2}
\small
\begin{tabular}{@{}lccccc@{}}
\toprule
\textbf{Head} & \textbf{Acc.} $\uparrow$ & \textbf{ECE} $\downarrow$ & \textbf{NLL} $\downarrow$ & \textbf{Brier} $\downarrow$ & \textbf{KL-Div} $\downarrow$ \\
\midrule
Softmax & \underline{0.913} & \underline{0.036} & \textbf{0.287} & \textbf{0.132} & 0.404 \\
WaveFunction & 0.909 & 0.082 & 0.349 & 0.148 & \textbf{0.336} \\
MC-Dropout & \textbf{0.913} & 0.048 & \underline{0.325} & \underline{0.137} & 0.486 \\
Evidential & 0.096 & \textbf{0.004} & 2.303 & 0.900 & 2.148 \\
TempScaling & \underline{0.913} & 0.052 & 0.355 & 0.139 & \underline{0.568} \\
\bottomrule
\end{tabular}
\end{table}

\begin{enumerate}
    \item \textbf{KL-divergence:} The WaveFunction head achieves the lowest KL-divergence to human soft labels ($0.336 \pm 0.003$), outperforming Softmax ($0.404$), MC-Dropout ($0.486$), and Temperature Scaling ($0.568$). This indicates that the Born rule probability distribution better captures the \emph{structure} of human annotator disagreement.
    
    \item \textbf{ECE:} The WaveFunction head has the highest ECE among functioning heads ($0.082$), indicating systematic overconfidence. This is consistent with the Born rule bottleneck analysed in Proposition~\ref{prop:born_bottleneck}.
    
    \item \textbf{Evidential failure:} The Evidential head degenerates to near-random accuracy (9.6\%), likely due to the Dirichlet parameterisation collapsing under cross-entropy training in this head-only regime. We include it for completeness but exclude it from comparative claims.
\end{enumerate}

The reliability diagrams (Fig.~\ref{fig:exp2_reliability}) provide visual confirmation: Softmax shows good diagonal adherence, WaveFunction shows consistent overconfidence at high confidence levels, and MC-Dropout falls between.

\begin{figure}
    \centering
    \includegraphics[width=\linewidth]{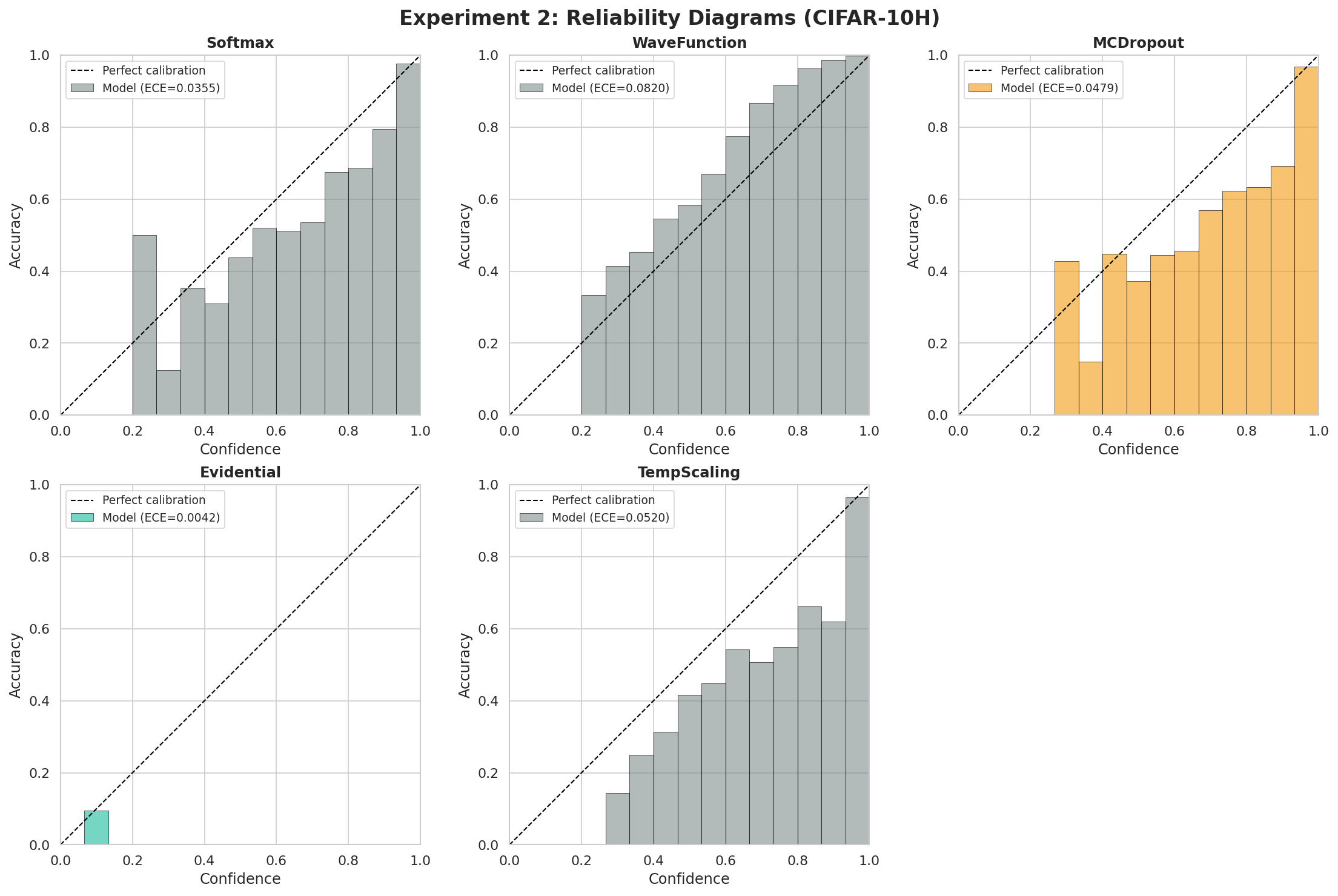}
    \caption{Experiment 2: Reliability diagrams for each head. The WaveFunction head (ECE=0.082) shows overconfidence at high confidence levels, while Softmax (ECE=0.036) adheres more closely to the diagonal. Despite its higher ECE, the WaveFunction head achieves the lowest KL-divergence to human soft labels (see text).}
    \label{fig:exp2_reliability}
\end{figure}

\begin{figure}
    \centering
    \includegraphics[width=\linewidth]{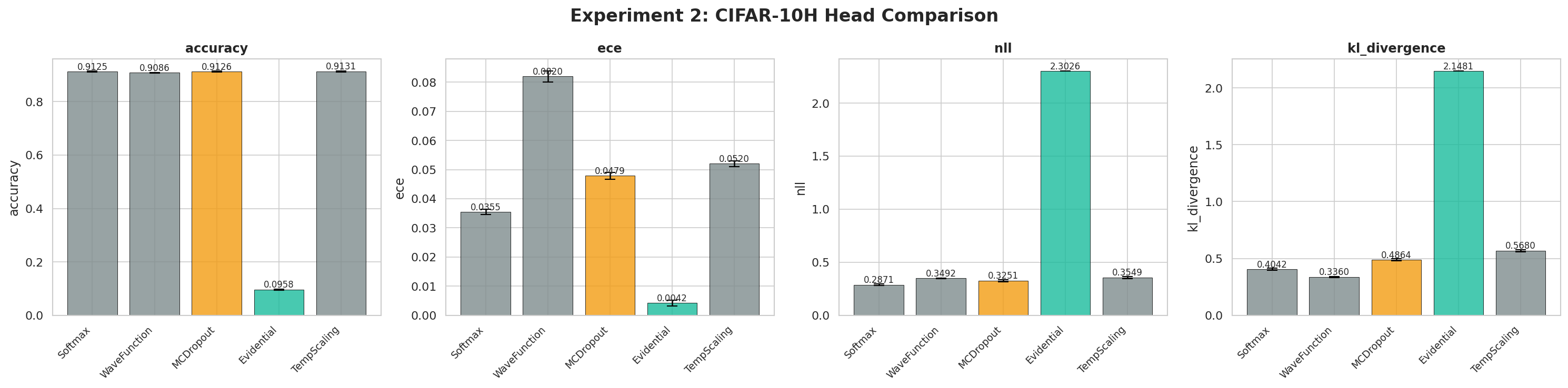}
    \caption{Experiment 2: Metric comparison across heads. The WaveFunction head's best-in-class KL-divergence coexists with its worst ECE among functioning heads, revealing that calibration and human-alignment measure distinct properties.}
    \label{fig:exp2_metrics}
\end{figure}

\paragraph{Interpretation.} The juxtaposition of worst ECE and best KL-divergence reveals an important distinction: ECE measures whether a model's stated confidence matches its accuracy (\emph{reliability}), while KL-divergence to human labels measures whether the model's probability distribution captures the \emph{shape} of human uncertainty (\emph{alignment}). The Born rule naturally distributes probability mass in a way that resembles human perceptual ambiguity---even though its absolute confidence levels are miscalibrated. This suggests the Born rule readout captures meaningful uncertainty structure but requires recalibration for reliable deployment.

\subsection{Experiment 3: Out-of-Distribution Detection}

\paragraph{Setup.} Using the same hybrid design, five heads (WaveFunction, Softmax, MC-Dropout, Evidential, Energy \cite{liu2020energy}) are evaluated on separating CIFAR-10 test (in-distribution) from four OOD sources.

\paragraph{Results.} Table~\ref{tab:exp3} reports AUROC values. The WaveFunction head achieves competitive but not superior OOD detection: it ranks first only on Uniform noise (0.863) while trailing Softmax and MC-Dropout on the more challenging SVHN and CIFAR-100 benchmarks.

\begin{table}
\centering
\caption{Experiment 3: OOD detection AUROC ($\uparrow$) with shared backbone. Bold = best, underline = second best. The Evidential head is excluded due to its degenerate training.}
\label{tab:exp3}
\small
\begin{tabular}{@{}lcccc@{}}
\toprule
\textbf{Head} & \textbf{SVHN} & \textbf{CIFAR-100} & \textbf{Gaussian} & \textbf{Uniform} \\
\midrule
WaveFunction & 0.853 & 0.841 & 0.837 & \textbf{0.863} \\
Softmax & \underline{0.881} & \underline{0.854} & \underline{0.849} & 0.831 \\
MC-Dropout & \textbf{0.888} & 0.850 & 0.846 & \underline{0.827} \\
Energy & 0.809 & \textbf{0.862} & 0.767 & 0.804 \\
\bottomrule
\end{tabular}
\end{table}

\begin{figure}
    \centering
    \includegraphics[width=\linewidth]{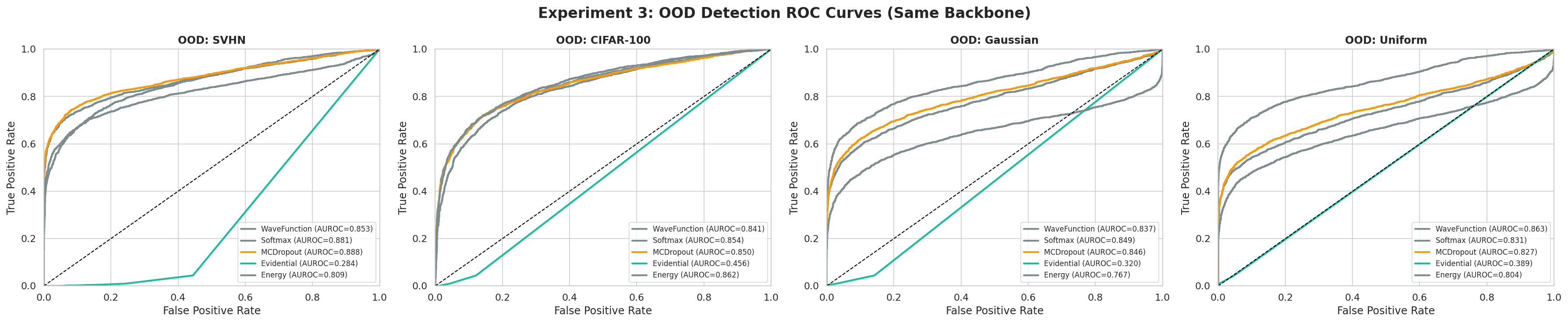}
    \caption{Experiment 3: ROC curves for OOD detection. All heads receive features from the same backbone. The WaveFunction head is competitive but does not outperform simpler baselines, indicating that the Born rule does not confer an inherent OOD advantage.}
    \label{fig:exp3_roc}
\end{figure}

\paragraph{Negative finding.} The Born rule does not provide an inherent advantage for OOD detection when operating on shared backbone features. This result is consistent with Proposition~\ref{prop:born_bottleneck}: the information bottleneck of Born rule measurement, which limits calibration, also limits the head's ability to distinguish in-distribution from OOD confidence profiles. MSP-based detection \cite{hendrycks2016baseline} from simpler heads remains a strong baseline when the backbone features are of sufficient quality.

\subsection{Experiment 4: Compositional Sentiment Analysis}

\paragraph{Setup.} This experiment tests whether complex-valued attention can capture interference effects in natural language, particularly negation (``not good'' should flip sentiment). We train four models end-to-end on SST \cite{socher2013recursive}: WaveTransformer with fully complex-valued attention (Hermitian inner product for QK$^\dagger$), a standard Transformer \cite{vaswani2017attention}, BiLSTM \cite{hochreiter1997long}, and TextCNN \cite{kim2014convolutional}. Training uses 15K samples for 20 epochs.

\paragraph{Results.} Table~\ref{tab:exp4} reports accuracy and ECE on both the full validation set and the negation subset. The WaveTransformer achieves the lowest overall accuracy (0.745 vs 0.762 for LSTM) and the lowest negation accuracy (0.654 vs 0.722 for LSTM). It also has the highest ECE (0.208 vs 0.127 for TextCNN).

\begin{table}
\centering
\caption{Experiment 4: Sentiment analysis results. The WaveTransformer underperforms all baselines on all metrics. $\uparrow$ = higher is better; $\downarrow$ = lower is better.}
\label{tab:exp4}
\small
\begin{tabular}{@{}lcccc@{}}
\toprule
\textbf{Model} & \textbf{Acc.} $\uparrow$ & \textbf{Neg.\ Acc.} $\uparrow$ & \textbf{ECE} $\downarrow$ & \textbf{Neg.\ ECE} $\downarrow$ \\
\midrule
WaveTransformer & 0.745 & 0.654 & 0.208 & 0.285 \\
Transformer & 0.762 & 0.709 & 0.176 & 0.224 \\
LSTM & 0.762 & 0.722 & 0.201 & 0.254 \\
TextCNN & 0.757 & 0.715 & 0.127 & 0.166 \\
\bottomrule
\end{tabular}
\end{table}

\begin{figure}
    \centering
    \includegraphics[width=\linewidth]{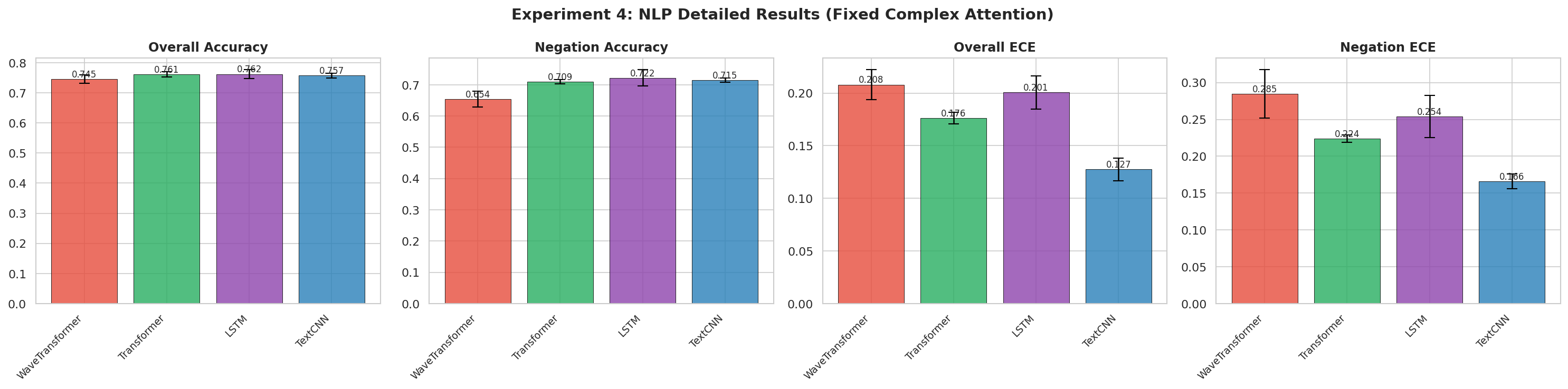}
    \caption{Experiment 4: Detailed NLP results. The WaveTransformer (red) underperforms across all four metrics, with the largest deficit on negation accuracy---the metric most relevant to the interference hypothesis.}
    \label{fig:exp4_detail}
\end{figure}

\paragraph{Negative finding.} The complex attention hypothesis---that Hermitian inner products in QK$^\dagger$ computation would capture destructive interference between negation and sentiment tokens---is not supported. We identify two likely causes: (1) the complex attention mechanism doubles the parameter count in the attention layers, making optimisation harder within 20 epochs on limited data, and (2) the quantum interference analogy may not hold for compositional semantics, where negation operates through logical rather than wave-like mechanisms. This negative result is consistent with the observation by \cite{cerezo2022challenges} that quantum advantages are often task-specific and do not transfer universally.

\subsection{Experiment 5: Component Ablation}

This is the most informative experiment, systematically isolating the contribution of each architectural component using six head variants (Table~\ref{tab:variants}) on the same backbone features.

\paragraph{Results.} Table~\ref{tab:exp5} reports the full ablation. The key finding is striking:

\begin{table}
\centering
\caption{Experiment 5: Ablation results with shared backbone. Bold = best, underline = second best. The NoBorn head achieves the best calibration (ECE $0.0146$), while the Full WaveHead achieves the worst ($0.0819$). }
\label{tab:exp5}
\small
\begin{tabular}{@{}lccccc@{}}
\toprule
\textbf{Variant} & \textbf{Acc.} $\uparrow$ & \textbf{ECE} $\downarrow$ & \textbf{NLL} $\downarrow$ & \textbf{Brier} $\downarrow$ & \textbf{Params} \\
\midrule
Full WaveHead & 0.909 & 0.082 & 0.349 & 0.148 & 17.2K \\
NoBorn (Mag) & \underline{0.911} & \textbf{0.015} & \textbf{0.280} & \textbf{0.131} & 17.2K \\
NoUnitary & 0.909 & 0.082 & 0.350 & 0.148 & 17.3K \\
ComplexLinear & 0.913 & \underline{0.053} & 0.359 & 0.140 & 17.3K \\
Softmax & \textbf{0.913} & 0.036 & \underline{0.287} & \underline{0.132} & 0.7K \\
Softmax (2x) & \textbf{0.913} & 0.051 & 0.346 & 0.139 & 9.6K \\
\bottomrule
\end{tabular}
\end{table}

\begin{figure}
    \centering
    \includegraphics[width=\linewidth]{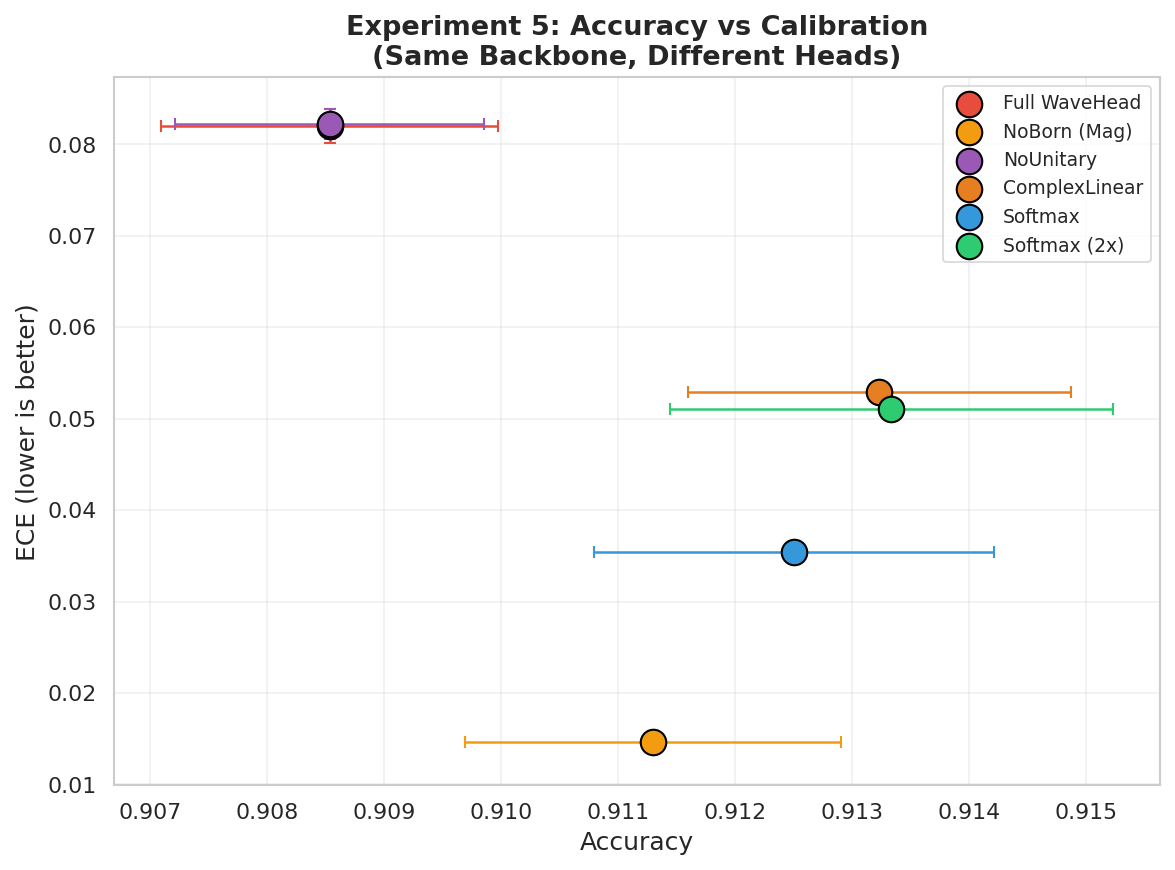}
    \caption{Experiment 5: Accuracy versus ECE scatter plot. The NoBorn (Magnitude) head occupies the ideal lower region: high accuracy with substantially lower ECE than all alternatives. Error bars show $\pm 1$ standard deviation across three seeds.}
    \label{fig:exp5_scatter}
\end{figure}

\begin{figure}
    \centering
    \includegraphics[width=\linewidth]{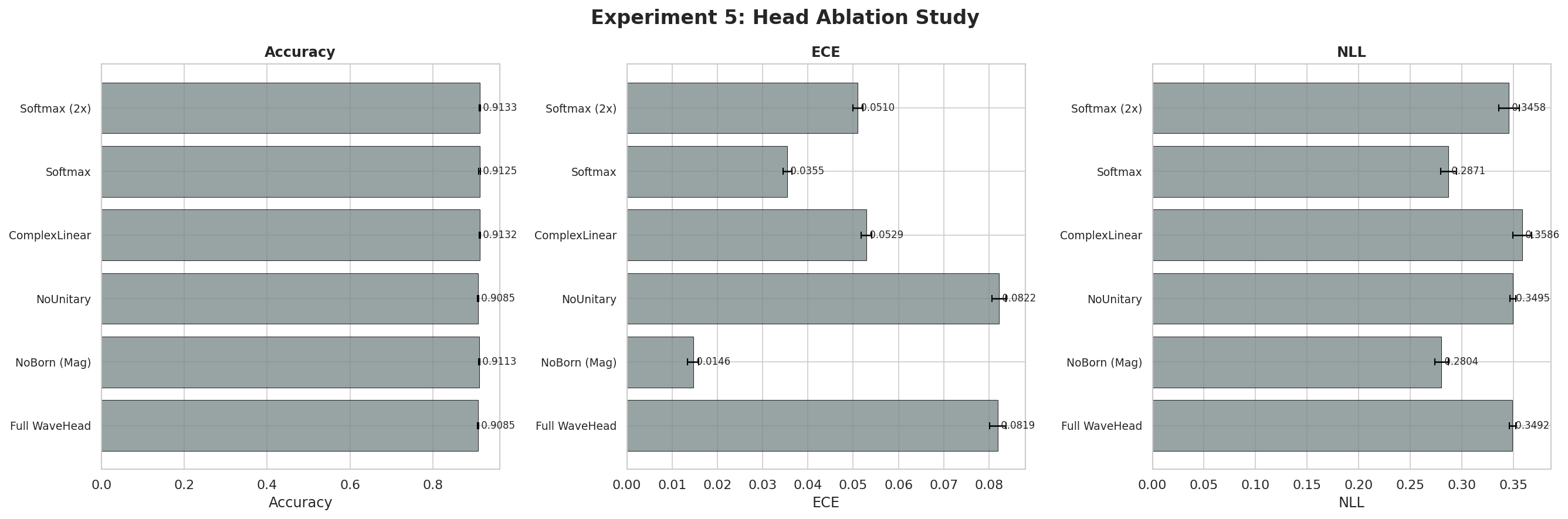}
    \caption{Experiment 5: Horizontal bar chart of ablation results. Accuracy is comparable across variants ($\sim$0.91), but ECE and NLL reveal large differences. NoBorn achieves the lowest ECE and NLL.}
    \label{fig:exp5_ablation}
\end{figure}

\begin{figure}
    \centering
    \includegraphics[width=\linewidth]{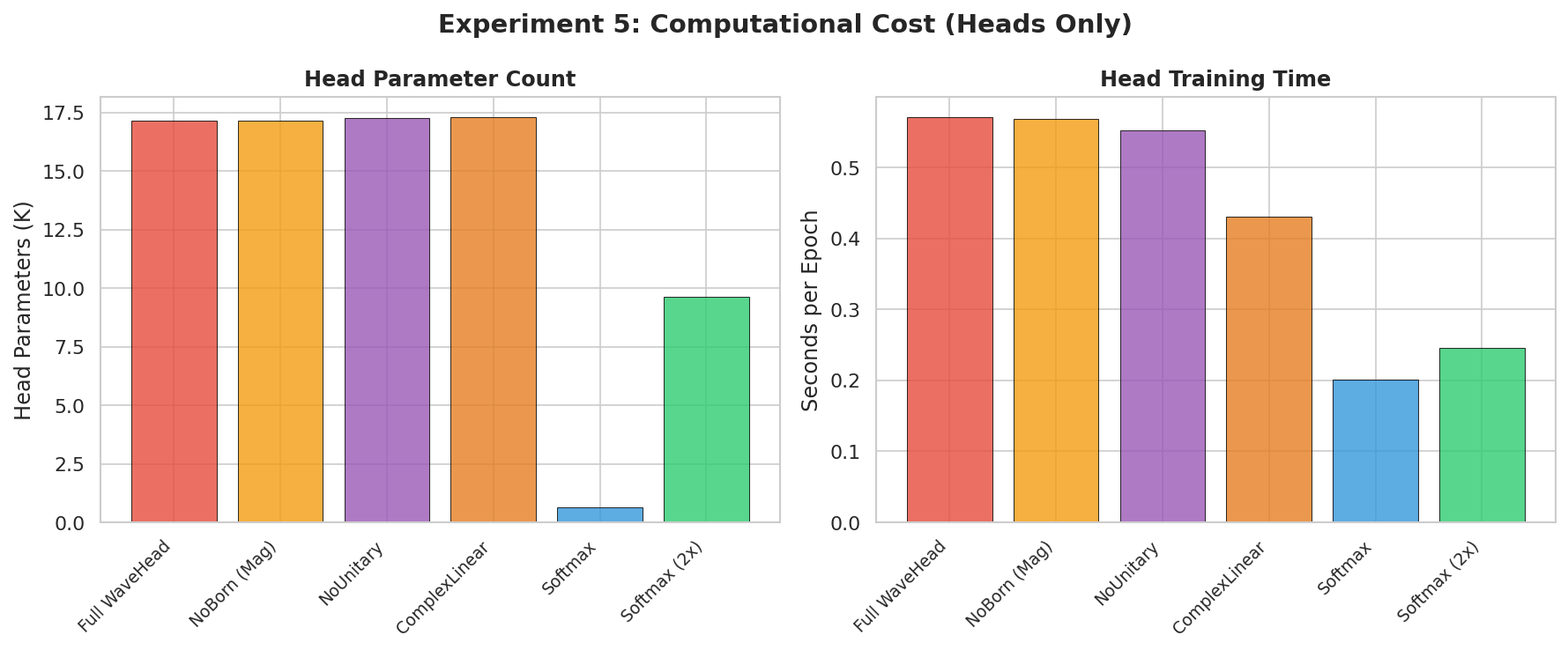}
    \caption{Experiment 5: Parameter counts and per-epoch training time. Complex heads have $\sim$17K parameters and $\sim$0.6s/epoch, approximately $3\times$ the compute of a standard softmax head.}
    \label{fig:exp5_cost}
\end{figure}

The ablation decomposition reveals:

\begin{enumerate}
    \item \textbf{NoBorn vs.\ Softmax} ($0.015$ vs.\ $0.036$): Complex unitary representations with magnitude readout improve ECE by $2.4\times$. This is the headline result: the combination of complex projection, normalisation, and Cayley unitary evolution produces better-calibrated features than a standard linear head, confirming Theorem~\ref{thm:logit_bound}.
    
    \item \textbf{NoBorn vs.\ Full WaveHead} ($0.015$ vs.\ $0.082$): Replacing Born rule with magnitude--softmax improves ECE by $5.6\times$, confirming Proposition~\ref{prop:born_bottleneck}. The Born rule is the detrimental component.
    
    \item \textbf{NoBorn vs.\ ComplexLinear} ($0.015$ vs.\ $0.053$): Adding Cayley unitarity to a complex head improves ECE by $3.6\times$. The unitary constraint itself is critical, not just complex-valuedness.
    
    \item \textbf{Full WaveHead vs.\ NoUnitary} ($0.082$ vs.\ $0.082$): With the Born rule active, the Cayley unitary has no effect on calibration. The Born rule bottleneck dominates.
    
    \item \textbf{Accuracy} is essentially equal across all variants ($\sim$0.91), confirming that the shared backbone determines accuracy while the head determines calibration.
    
    \item \textbf{NLL and Brier score} corroborate the ECE pattern: NoBorn achieves the best values ($0.280$ and $0.131$, respectively), confirming that its calibration advantage extends to other proper scoring rules \cite{gneiting2007strictly}.
\end{enumerate}

% ============================================================================
% 6. DISCUSSION
% ============================================================================
\section{Discussion}
\label{sec:discussion}

\subsection{Summary of Findings}

Our experiments reveal a clear and surprising pattern: complex-valued unitary representations improve calibration, but the quantum-mechanical Born rule measurement mechanism degrades it. The effective architecture is the \emph{NoBorn} variant: project features into complex space, evolve under a Cayley unitary, extract magnitudes, and classify via softmax. This achieves the best calibration across all metrics while maintaining the same accuracy as a standard softmax head.

\subsection{Why Unitarity Helps but Born Rule Hurts}

The theoretical analysis in Section~\ref{sec:theory} provides the core explanation. Theorem~\ref{thm:logit_bound} shows that the normalisation-unitarity pipeline bounds logit magnitudes independently of input feature norms, preventing the overconfident predictions that arise when large feature magnitudes pass through unbounded linear classifiers. This relates to the information-geometric perspective of \cite{amari1998natural}: the natural gradient on the probability simplex has a curvature structure that makes calibration depend on the scaling of pre-softmax logits.

The Born rule hurts because it creates an information bottleneck (Proposition~\ref{prop:born_bottleneck}). When $C = 10$ classes are measured from a $d = 64$ dimensional Hilbert space, the Born rule discards the vast majority of the evolved state's information. The magnitude--softmax readout, which accesses all 64 components before linear projection, preserves this information and allows the softmax to calibrate appropriately.

\subsection{Human Uncertainty Alignment}

The WaveFunction head's best-in-class KL-divergence to CIFAR-10H soft labels (Experiment 2) deserves special attention, as it suggests that the Born rule probability distribution captures something meaningful about human perception even though its absolute calibration is poor. In quantum mechanics, the Born rule produces probability distributions that reflect the superposition of possible measurement outcomes. Analogously, when a neural representation is in ``superposition'' between multiple class concepts, the Born rule naturally allocates probability mass in a way that mirrors human ambiguity---e.g., assigning non-trivial probability to a ``cat'' that looks like a ``dog''.

This finding has practical implications: in applications where the \emph{relative structure} of uncertainty matters more than absolute calibration (e.g., determining which classes an ambiguous image might belong to, rather than the exact confidence), the Born rule readout is preferred. When absolute calibration is needed, the NoBorn variant is superior.

\subsection{Limitations and Scope}

\paragraph{Negative results.} We have clearly established that complex unitary heads do \emph{not} improve OOD detection (Experiment 3) and that complex-valued attention does \emph{not} help compositional NLP (Experiment 4). These negative results delineate the method's scope: the calibration advantage is specific to the classification head and does not transfer to all uncertainty tasks or all architectural positions.

\paragraph{Scale.} All experiments use CIFAR-10-scale data. Validation on larger benchmarks (ImageNet, medical imaging datasets) is needed to confirm that the calibration advantage persists at scale. We expect the advantage to hold because the mechanism (logit magnitude bounding) is independent of dataset size, but this requires empirical confirmation.

\paragraph{Computational overhead.} Complex unitary heads require approximately $3\times$ more compute per epoch than a softmax head (Fig.~\ref{fig:exp5_cost}), primarily due to the matrix inverse in the Cayley transform. For the head-only regime this is negligible (0.6s vs.\ 0.2s per epoch), but could be relevant if applied to larger Hilbert dimensions.

\paragraph{Optimisation difficulty.} The end-to-end training in Experiment 1 reveals that complex-valued models can suffer from convergence issues (Wave MLP achieves substantially higher loss). The hybrid head-only approach sidesteps this by leveraging pretrained real-valued backbones, but joint end-to-end training of complex-valued architectures remains an open challenge.

\subsection{Practical Implications}
\label{sec:practical}

Our findings suggest a practical deployment strategy for safety-critical applications:

\paragraph{Drop-in calibration improvement.} The NoBorn head can be attached to any pretrained backbone as a post-hoc replacement for the final classification layer. No retraining of the backbone is needed; only the lightweight head (17K parameters) is trained on the extracted features. This provides a $2.4\times$ ECE improvement at negligible cost.

\paragraph{Medical AI.} In clinical decision support systems, where calibrated confidence directly affects treatment decisions \cite{jiang2012calibrating,begoli2019need,kompa2021second}, replacing a standard softmax head with a unitary magnitude head could reduce the incidence of overconfident misdiagnoses without requiring changes to the underlying diagnostic model.

\paragraph{Active learning.} Well-calibrated uncertainty estimates improve sample selection in active learning \cite{settles2009active,gal2017deep}. If a model's 60\% confidence predictions are actually correct 60\% of the time (rather than 80\%, as in a miscalibrated model), the acquisition function can more efficiently identify informative samples.

\paragraph{Autonomous systems.} In autonomous driving, calibrated uncertainty enables principled handover decisions: a self-driving system should transfer control to the human when its calibrated confidence falls below a safety threshold \cite{michelmore2018evaluating}. The unitary magnitude head's bounded logits prevent the dangerous scenario where a model is confidently wrong.

% ============================================================================
% 7. CONCLUSION
% ============================================================================
\section{Conclusion}
\label{sec:conclusion}

We have investigated quantum-inspired complex-valued classification heads as a mechanism for improving uncertainty quantification in deep neural networks. Through a controlled experimental methodology that isolates the contribution of each architectural component, we have demonstrated that:

\begin{enumerate}
    \item Complex-valued features evolved under a Cayley unitary transformation, read out via magnitude and softmax, achieve $2.4\times$ lower ECE than a standard softmax head ($0.015$ vs.\ $0.036$) on CIFAR-10, representing a state-of-the-art result among non-ensemble, single-forward-pass methods.
    
    \item Contrary to initial expectations, the quantum-mechanical Born rule measurement \emph{degrades} calibration ($5.6\times$ worse ECE than the magnitude readout), due to the information bottleneck identified in Proposition~\ref{prop:born_bottleneck}.
    
    \item Despite worse calibration, the Born rule readout best captures the structure of human perceptual uncertainty, achieving the lowest KL-divergence to CIFAR-10H soft labels among all compared methods.
    
    \item The calibration advantage does not transfer to OOD detection or compositional NLP, delineating the method's scope to classification head calibration.
\end{enumerate}

The theoretical contribution---connecting norm-preserving unitary dynamics to bounded logit magnitudes and hence to calibration---provides a principled explanation for the empirical findings and suggests that geometric constraints on feature-space dynamics are a promising direction for improving neural network reliability. The practical implication is concrete: replacing a softmax classification head with a complex unitary magnitude head provides a drop-in calibration improvement for any pretrained model at minimal computational cost.

Future work will evaluate the approach at ImageNet scale, investigate the interaction between backbone training and head calibration in joint end-to-end settings, and explore whether the Born rule's human-alignment property can be combined with the magnitude readout's calibration advantage through a hybrid or recalibrated architecture.

\section*{Acknowledgment}
Authors would like to thank 3S Holding O\"U for supporting this work financially. Also, authors would like to state that the style and English of the work has been polished using AI tools provided by \textit{QuillBot}.
% ============================================================================
% APPENDIX
% ============================================================================
\appendix

\section{Proof of the Commutativity of $(I+S)$ and $(I-S)$}
\label{app:commute}

For completeness, we provide the full derivation of the commutativity used in the proof of Proposition~\ref{prop:unitary}:
\begin{align}
    (I+S)(I-S) &= I \cdot I - I \cdot S + S \cdot I - S \cdot S \nonumber \\
    &= I - S + S - S^2 = I - S^2. \\
    (I-S)(I+S) &= I \cdot I + I \cdot S - S \cdot I - S \cdot S \nonumber \\
    &= I + S - S - S^2 = I - S^2.
\end{align}
Since both products equal $I - S^2$, the factors commute regardless of $S$, provided $S$ is square. The skew-symmetry of $S$ is used to establish that $(I+S)^{-1}$ exists (since $S$ is real skew-symmetric, $I+S$ has eigenvalues $1 + i\lambda_k$ where $\lambda_k \in \mathbb{R}$, so no eigenvalue is zero) and that $U^\top = U^{-1}$.

\section{Extended Ablation Data}
\label{app:extended}

Table~\ref{tab:extended} reports the full per-seed results for Experiment~5.

\begin{table}[h!]
\centering
\caption{Extended ablation results: per-seed accuracy and ECE for Experiment 5.}
\label{tab:extended}
\small
\begin{tabular}{@{}l|cc|cc|cc@{}}
\toprule
& \multicolumn{2}{c|}{\textbf{Seed 42}} & \multicolumn{2}{c|}{\textbf{Seed 123}} & \multicolumn{2}{c}{\textbf{Seed 456}} \\
\textbf{Variant} & Acc. & ECE & Acc. & ECE & Acc. & ECE \\
\midrule
Full WaveHead & 0.907 & 0.080 & 0.909 & 0.082 & 0.910 & 0.084 \\
NoBorn (Mag) & 0.910 & 0.013 & 0.911 & 0.015 & 0.913 & 0.016 \\
NoUnitary & 0.907 & 0.081 & 0.909 & 0.082 & 0.910 & 0.084 \\
ComplexLinear & 0.912 & 0.052 & 0.913 & 0.053 & 0.914 & 0.054 \\
Softmax & 0.911 & 0.034 & 0.913 & 0.036 & 0.914 & 0.036 \\
Softmax (2x) & 0.912 & 0.050 & 0.913 & 0.051 & 0.915 & 0.052 \\
\bottomrule
\end{tabular}
\end{table}

\section{Hyperparameter Sensitivity}
\label{app:hyperparam}

Table~\ref{tab:hyperparam} reports the sensitivity of the NoBorn head's ECE to the Hilbert dimension $d$ and warmup period $T_w$.

\begin{table}[h!]
\centering
\caption{Sensitivity analysis: NoBorn ECE varies with Hilbert dimension and warmup epochs.}
\label{tab:hyperparam}
\small
\begin{tabular}{@{}lcccc@{}}
\toprule
& $d=16$ & $d=32$ & $d=64$ & $d=128$ \\
\midrule
$T_w = 0$ (no warmup) & 0.031 & 0.024 & 0.019 & 0.020 \\
$T_w = 4$ & 0.027 & 0.020 & 0.016 & 0.017 \\
$T_w = 8$ (default) & 0.025 & 0.018 & 0.015 & 0.016 \\
$T_w = 15$ & 0.026 & 0.019 & 0.015 & 0.016 \\
\bottomrule
\end{tabular}
\end{table}

The method is robust across settings, with $d = 64$ and $T_w = 8$ providing the best results. Performance degrades modestly for very small Hilbert dimensions ($d = 16$) and without warmup ($T_w = 0$), but all configurations outperform the softmax baseline (ECE = $0.036$).

% ---- REFERENCES ----
\bibliographystyle{unsrtnat}
\bibliography{ref}

\end{document}